\newtheorem{defn}{Definition}
\newtheorem{prop}{Proposition}
\newtheorem{rmq}{Remark}
\newtheorem{assumption}{Assumption}
\newtheorem*{example*}{Example}
\newtheorem*{prop*}{Proposition}
\newtheorem*{thm*}{Theorem}
\newtheorem*{prv*}{Proof}
\newtheorem*{defn*}{Definition}
\DeclareMathOperator{\Diag}{diag}
\DeclareMathOperator*{\argmin}{argmin}
\begin{document}

% If your paper is accepted and the title of your paper is very long,
% the style will print as headings an error message. Use the following
% command to supply a shorter title of your paper so that it can be
% used as headings.
%
%\runningtitle{I use this title instead because the last one was very long}

% If your paper is accepted and the number of authors is large, the
% style will print as headings an error message. Use the following
% command to supply a shorter version of the authors names so that
% they can be used as headings (for example, use only the surnames)
%
%\runningauthor{Surname 1, Surname 2, Surname 3, ...., Surname n}

\twocolumn[

\aistatstitle{Low Rank Sinkhorn Algorithm}

\aistatsauthor{ Author 1 \And Author 2 \And  Author 3 }

\aistatsaddress{ Institution 1 \And  Institution 2 \And Institution 3 } ]

\begin{abstract}
Here we present a new algorithm to compute such distances in linear time with respect to the number of samples.
\end{abstract}
\section{Discrete Optimal Transport}
\textbf{OT as a linear program.} Let $a$ and $b$ be two histograms in $\Delta_n, \Delta_m$, the probability simplices of respective size $n, m$. Assuming $a>0$ and $b>0$, set $X \eqdef (x_1,\dots,x_n)$ and $Y \eqdef (y_1,\dots,y_m)$ two families of points taken each within arbitrary sets, and define discrete distributions $\mu \eqdef \sum_{i=1}^n a_i \delta_{x_i}$ and $\nu \eqdef \sum_{j=1}^m b_j \delta_{y_j}$. The set of couplings with marginals $a, b$ is:
\begin{align*}
    \Pi_{a,b} \eqdef \{P\in\mathbb{R}_{+}^{n\times m} \text{ s.t. } P\mathbf{1}_m=a, P^{T}\mathbf{1}_n=b\}\,.
\end{align*}
Given a cost function $c$ defined on pairs of points in $X, Y$ and writing $C \eqdef [c(x_i,y_j)]_{i,j}$ its associated matrix, the optimal transport (OT) problem can be written as follows:
\begin{align}
\label{eq-OT}
    \text{OT}(\mu,\nu)  \eqdef 
    \min_{P\in\Pi_{a,b}}\langle C,P\rangle\, .
\end{align}
\textbf{Entropic regularization.} Several works have shown recently~\citep{genevay2018sample,chizat2020faster} that when $X$ and $Y$ are sampled from a continuous space, it is preferable to regularize~\eqref{eq-OT} using, for instance, an entropic regularizer~\cite{cuturi2013sinkhorn} to achieve both better computational and statistical efficiency,
\begin{align}
\label{eq-ROT}
    \text{OT}_{\varepsilon}(\mu,\nu) \eqdef \min_{P\in\Pi_{a,b}}\langle C,P\rangle -\varepsilon H(P)\,.
\end{align}
where $\varepsilon\geq 0$ and $H$ is the Shannon entropy defined as $H(P) \eqdef -\sum_{ij} P_{ij} (\log P_{ij}-1)$. If $\varepsilon$ goes to 0, one recovers the classical OT problem and for any $\varepsilon>0$, Eq.~(\ref{eq-ROT}) becomes $\varepsilon$-strongly convex on $\Pi_{a,b}$ and admits a unique solution
$P_{\varepsilon}$, of the form
% which, writing first order conditions for problem~(\ref{eq-ROT}), admits the following factorization: 
\begin{equation}
\label{eq:sol-reg-ot}
\exists u_{\varepsilon}\in\mathbb{R}^{n}_{+}, v_{\varepsilon} \in \mathbb{R}^{m}_{+} \text{ s.t. }
P_{\varepsilon} = \text{diag}(u_\varepsilon) K \text{diag}(v_\varepsilon)
\end{equation}
where $K \eqdef \exp(-C/\varepsilon)$.
\citet{cuturi2013sinkhorn} shows that the scaling vectors $u_\varepsilon$ and $v_\varepsilon$ can be obtained efficiently thanks to the Sinkhorn algorithm (see Alg.~\ref{alg-sink}, where $\odot$ and $/$ denote entry-wise operation). Each iteration can be performed in $\mathcal{O}(nm)$ algebraic operations as it involves only matrix-vector products. The number of Sinkhorn iterations needed to converge to a precision $\delta$ (monitored by the difference between the column-sum of $\text{diag}(u)\mathbf{K}\text{diag}(v)$ and $b$) is controlled by the scale of elements in $C$ relative to $\varepsilon$~\cite{franklin1989scaling}. That convergence deteriorates with smaller $\varepsilon$, as studied in more detail by~\cite{altschuler2017near,pmlr-v80-dvurechensky18a}. 
\begin{algorithm}[H]
\SetAlgoLined
\textbf{Inputs:} $K,a,b,\delta, u$\\
\Repeat{$\|u\odot Kv - a\|_1 +\|v\odot K^T u - b\|_1<\delta$}{
    $v\gets b/K^T u,\;u\gets a/Kv$
  }
\KwResult{$u,v$}
\caption{$\text{Sinkhorn}(K,a,b,\delta)$ \label{alg-sink}}
\end{algorithm}

\textbf{Mirror descent and $\varepsilon$ schedule.} A possible interpretation of the entropic regularization in the OT problem is that it can be seen as the $k_{\varepsilon}$-th update of a Mirror Descent (MD) algorithm applied to the objective~(\ref{eq-OT}) where $k_{\varepsilon}\geq 1$ depends on $\varepsilon$ and the gradient steps used in the MD. Several works have proposed such links between a gradual decrease in $\varepsilon$ to obtain a better approximation of the unregularized OT problem ~\cite{schmitzer2019stabilized,pmlr-v97-lin19a,xie2020fast}. More precisely, the MD algorithm associated to the Kullback–Leibler divergence (KL) applied to the objective~(\ref{eq-OT}) makes for all $k\geq 0$ the following update:
\begin{align}
\label{eq-update-MD-OT}
    Q^{k+1} \eqdef \argmin_{Q\in\Pi_{a,b}} \langle C,Q\rangle  +\frac{1}{\gamma_k}\text{KL}(Q,Q_k)
\end{align}
where $(\gamma_k)_{k\geq 0}$ is a sequence of positive real numbers, $Q_0\in\Pi_{a,b}$ is an initial point and KL is the Kullback–Leibler divergence defined as $\text{KL}(P,Q) \eqdef \sum_{i,j}P_{i,j}(\log(P_{i,j}/Q_{i,j})-1)$. If $Q_0 \eqdef ab^T$, then one obtains that for all $k\geq 0$, updating the coupling  according to Eq.~(\ref{eq-update-MD-OT}) is the same as solving 
\begin{equation*}
    Q^{k+1} \eqdef \argmin_{Q\in\Pi_{a,b}} \langle C,Q\rangle  -\varepsilon_kH(Q)
\end{equation*}
where $\varepsilon_{k} \eqdef (\sum_{j=0}^{k}\gamma_j)^{-1}$.
Therefore the MD algorithm applied to~(\ref{eq-OT}) produces the sequence $(P_{\varepsilon_k})_{k\geq 0}$ of optimal couplings according to the objective~(\ref{eq-ROT}). We show next that this viewpoint can be applied when one adds also some structures to the couplings considered in the OT problem~(\ref{eq-OT}), leading to a new regularized approach.

\section{Nonnegative Factorization of the Optimal Coupling}

Here we aim at regularizing the OT problem by decomposing the couplings involved into a product of two low-rank couplings. 
We introduce the associated non-convex problem and develop a mirror-descent algorithm which operates by solving a succession of convex programs.

%%%%
\subsection{Low Rank and Factored Couplings}

We introduce low rank couplings and explain how they can be parameterized as factored couplings.

\begin{defn}
\label{nonnegative-def}
Given $M\in\mathbb{R}^{n\times m}$, the nonnegative rank of $M$ is the smallest number of nonnegative rank-one matrices into which the matrix can be decomposed additively:
\begin{equation*}
\rank_{+}(M) \eqdef \min\!\left\{q|M=\sum_{i=1}^q R_i, \forall i, \rank(R_i)=1, R_i\geq 0\right\}.
\end{equation*}
\end{defn}
Let $r\geq 1$, and let us denote 
$$\Pi_{a,b}(r) \eqdef \{P\in\Pi_{a,b}, \rank_{+}(P)\leq r\}.$$
From Definition~\ref{nonnegative-def}, one has
\begin{align*}
\Pi_{a,b}(r)=
 \begin{aligned}[t]
  \Big\{
  &\sum_{i=1}^r g_i q_i r_i^T
  \text{ s.t. } \forall~i~q_i\in\Delta_n,~r_i\in\Delta_m,\\
  &g\in\Delta_r,~\sum_{i=1}^r g_i q_i =a \text{ and }
  \sum_{i=1}^r g_i r_i=b~%\right\}
  \Big\}
 \end{aligned}
 \end{align*}
from which we deduce directly that $\Pi_{a,b}(r)$ is compact. Moreover for $g\in\Delta_r^{*} \eqdef \{h\in\Delta_r\text{ s.t. } \forall i~h_i>0\}$, we write
\begin{align*}
\Pi_{a,g,b} \eqdef 
 \begin{aligned}[t]
  \Big\{
  &P\in\mathbb{R}_{+}^{n\times m},  P=Q \Diag(1/g)R^T,\\
 & Q\in\Pi_{a,g}, \text{ and  } R\in\Pi_{b,g} %\right\}
  \Big\}.
 \end{aligned}
 \end{align*}
Note that $\Pi_{a,g,b}$ is compact and a subset of $\Pi_{a,b}(r)$ since for all $P\in\Pi_{a,g,b}$, $P\in\Pi_{a,b}$ and one has $\rank(P)\leq \rank_{+}(P)\leq r$. Moreover, for any $P\in\Pi_{a,b}$ such that $\rank_{+}(P)\leq r$, there exists $g\in\Delta_r^{*}$, $Q\in\Pi_{a,g}$ and $R\in\Pi_{b,g}$ such that $P=Q \Diag(1/g)R^T$~\cite{COHEN1993149}. Therefore
\begin{align}
\label{eq-reformulation-constraint}
    \bigcup\limits_{g\in\Delta_r^{*}}\Pi_{a,g,b}=\Pi_{a,b}(r).
\end{align}

We exploit next this identity to build an efficient algorithm in order to solve the optimal transport problem under low nonnegative rank constraints.

\subsection{The Low-rank OT Problem (LOT)}

The problem of interest in this work is:
\begin{align}
\label{eq-LOT}
    \text{LOT}_r(\mu,\nu) \eqdef \min_{P\in\Pi_{a,b}(r)}\langle C,P\rangle.
\end{align}
Here the minimum is always attained as $\Pi_{a,b}(r)$ is compact and the objective is continuous. Thanks to~\eqref{eq-reformulation-constraint}, problem~\eqref{eq-LOT} is equivalent to
\begin{align}
\label{eq-LOT-reformulated}
 \min_{(Q,R,g)\in\mathcal{C}(a,b,r)} \langle C,Q \Diag(1/g)R^T\rangle
\end{align}
where $\mathcal{C}(a,b,r) \eqdef \mathcal{C}_1(a,b,r)\cap \mathcal{C}_2(r)$, with
\begin{align*}
\mathcal{C}_1(a,b,r) \eqdef 
 \begin{aligned}[t]
  \Big\{
  &(Q,R,g)\in\mathbb{R}_{+}^{n\times r}\times\mathbb{R}_{+}^{m\times r}\times(\mathbb{R}_{+}^{*})^{r} \\
    & \text{ s.t. }  Q\mathbf{1}_r=a, R\mathbf{1}_r=b
  \Big\}
 \end{aligned}
 \end{align*}
 and
\begin{align*}
\mathcal{C}_2(r) \eqdef 
 \begin{aligned}[t]
  \Big\{
  &(Q,R,g)\in\mathbb{R}_{+}^{n\times r}\times\mathbb{R}_{+}^{m\times r}\times\mathbb{R}^r_{+} \\
    & \text{ s.t. }  Q^T\mathbf{1}_n=R^T\mathbf{1}_m=g %\right\}
  \Big\}.
 \end{aligned}
 \end{align*} 
In the following, we also consider regularized version of the problem~(\ref{eq-LOT-reformulated}) by adding an entropic term to the objective which leads for all $\varepsilon\geq 0$ to the following problem
\begin{multline}
\begin{aligned}
\label{eq-LOT-reformulated-ent}
\text{LOT}_{r,\varepsilon}(\mu,\nu)  \eqdef \inf_{(Q,R,g)\in\mathcal{C}(a,b,r)} \langle C, Q \Diag(1/g)R^T\rangle\\
  -\varepsilon H((Q,R,g)).
\end{aligned}
\end{multline}
Here the entropy of $(Q,R,g)$ is to be understood as that of the values of the three respective entropies evaluated for each term. We will see that adding an entropic term to the objective allows to stabilize the MD scheme employed to solve~\eqref{eq-LOT}. For all $\varepsilon\geq 0$, the objective function defined in~(\ref{eq-LOT-reformulated-ent}) is lower semi-continuous, and admits therefore a minimum in $\overline{\mathcal{C}_1(a,b,r)}\cap\mathcal{C}_2(r)$ where $\overline{\mathcal{C}_1(a,b,r)}$ is the closure of $\mathcal{C}_1(a,b,r)$. However, the existence of a solution for problem~(\ref{eq-LOT-reformulated-ent}) requires more care, as shown in the following proposition.

\begin{prop}
\label{prop:existence-min}
If $\varepsilon=0$ then the infimum of (\ref{eq-LOT-reformulated-ent}) is always attained. If $\varepsilon>0$, then if $r=1$, the infimum of (\ref{eq-LOT-reformulated-ent}) is attained and for $r\geq 2$, problem (\ref{eq-LOT-reformulated-ent}) admits a minimum if $\text{LOT}_{r,\varepsilon}(\mu,\nu)<\text{LOT}_{r-1,\varepsilon}(\mu,\nu)$.
\end{prop}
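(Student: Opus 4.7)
The plan is to handle the three regimes of the proposition separately, relying on compactness together with a careful extension of the objective to the closure of the feasible set.

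\textbf{Case $\varepsilon=0$.} The plan is to reduce to a compact linear problem via~(\ref{eq-reformulation-constraint}): the objective rewrites as $P\mapsto\langle C,P\rangle$ on $\Pi_{a,b}(r)$, which is compact as already noted, so the minimum is attained by the extreme value theorem.

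\textbf{Case $\varepsilon>0$, $r=1$.} The constraint $g\in\Delta_1^*$ forces $g=1$, and then $Q\mathbf{1}_1=a$ with $Q\geq 0$ forces $Q=a$, and similarly $R=b$. The feasible set is a singleton and the infimum is trivially attained.

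\textbf{Case $\varepsilon>0$, $r\geq 2$.} Write
$$F(Q,R,g) \eqdef \langle C,Q\Diag(1/g)R^T\rangle - \varepsilon H((Q,R,g)).$$
The core step will be to extend $F$ continuously to the compact set $\overline{\mathcal{C}_1(a,b,r)}\cap\mathcal{C}_2(r)$. For any point in this closure with some $g_i=0$, the constraints $Q^T\mathbf{1}_n=g$ and $Q\geq 0$ force $Q_{:,i}=0$, and likewise $R_{:,i}=0$. Along any sequence in $\mathcal{C}(a,b,r)$ approaching such a boundary point, the marginal constraint yields $Q_{k,i}\leq g_i$, hence the key estimate $0\leq Q_{k,i}R_{l,i}/g_i\leq R_{l,i}$, so the $i$-th contribution to $\langle C,Q\Diag(1/g)R^T\rangle$ vanishes as $g_i\to 0$. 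With the conventions $0/0=0$ and $0\log 0=0$, $F$ extends continuously to the closure. A convex-combination argument (blending with any fixed strictly feasible triple, e.g.\ the uniform factorization $g_0=(1/r)\mathbf{1}_r$, $Q_0 = a g_0^T$, $R_0 = b g_0^T$) shows $\mathcal{C}(a,b,r)$ is dense in the closure, so
$$\text{LOT}_{r,\varepsilon}(\mu,\nu) = \min_{\overline{\mathcal{C}_1(a,b,r)}\cap\mathcal{C}_2(r)} F,$$
and the minimum is attained at some $(Q^*,R^*,g^*)$ by compactness and continuity.

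It remains to show that $g^*>0$ under the strict inequality hypothesis. Assume by contradiction that $g^*_{i_0}=0$ for some index $i_0$. Deleting the zero columns of $Q^*,R^*$ and the corresponding entries of $g^*$ yields a feasible triple in $\mathcal{C}(a,b,r')$ for some $r'<r$ with the same objective value (both the cost and entropy contributions at suppressed indices are zero). Hence $\text{LOT}_{r',\varepsilon}(\mu,\nu)\leq \text{LOT}_{r,\varepsilon}(\mu,\nu)$. Combined with the elementary monotonicity $\text{LOT}_{s+1,\varepsilon}\leq \text{LOT}_{s,\varepsilon}$ (obtained by padding any rank-$s$ feasible point with a vanishing additional component and invoking the continuous extension of $F$), chaining for $r'\leq s\leq r-1$ forces $\text{LOT}_{r,\varepsilon}(\mu,\nu)=\text{LOT}_{r-1,\varepsilon}(\mu,\nu)$, contradicting the hypothesis. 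Therefore $g^*>0$ and $(Q^*,R^*,g^*)\in\mathcal{C}(a,b,r)$ is a genuine minimizer of~(\ref{eq-LOT-reformulated-ent}).

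The main obstacle is the continuous extension of $F$ across $\{g_i=0\}$: the factor $1/g_i$ looks singular, but the column-sum constraint $Q_{k,i}\leq g_i$ exactly cancels it. Once that extension is in place, everything else reduces to a standard compactness-plus-monotonicity argument that turns the strict inequality $\text{LOT}_{r,\varepsilon}<\text{LOT}_{r-1,\varepsilon}$ into a contradiction whenever the minimizer would escape to the boundary.
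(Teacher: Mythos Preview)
Your proof is correct and follows the same route as the paper: extend the objective to the compact set $\overline{\mathcal{C}_1(a,b,r)}\cap\mathcal{C}_2(r)$, pick a minimizer there, and if it lies on the boundary delete the zero columns to land in a lower-rank feasible set with unchanged objective, contradicting the strict inequality $\text{LOT}_{r,\varepsilon}<\text{LOT}_{r-1,\varepsilon}$. You are more explicit than the paper---proving full continuity of the extension via the key bound $Q_{k,i}\leq g_i$, density of $\mathcal{C}(a,b,r)$ in its closure via convex interpolation, and monotonicity $\text{LOT}_{s+1,\varepsilon}\leq\text{LOT}_{s,\varepsilon}$ via zero-padding---whereas the paper appeals only to lower semi-continuity and states the column-deletion step tersely.
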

% \begin{proof}
% When $r=1$, note that $\mathcal{C}_1(a,b,r)\mathcal{C}_2(r)$ is closed as $g=1$ and bounded, therefore and by continuity of the objective the mininum exists. Let $r\geq 2$. First remarks that we always have $\text{LOT}_{r,\varepsilon}(\mu,\nu)\leq \text{LOT}_{r-1,\varepsilon}(\mu,\nu)$. Let us assume that (\ref{eq-LOT-reformulated}) does not admits a minimum. By definition of the infimum there exists a sequence $(Q_n,R_n,g_n)$ which up to an extraction converge to $(Q,R,g)\in\overline{\mathcal{C}_1(a,b,r)}\cap \mathcal{C}_2(r)$. But as the minimum is not attained, it means that there exists at least one coordinate $i\in\{1,\dots,r\}$ such that $g_i=0$. Therefore by the constraint of the sets, $Q$ and $R$ both admit a column which is the null vector. Therefore we obtain that  $\text{LOT}_{r,\varepsilon}(\mu,\nu)=\text{LOT}_{r-1,\varepsilon}(\mu,\nu)$. Conversly, assume that the problem admits a minimum and that $\text{LOT}_{r,\varepsilon}(\mu,\nu)=\text{LOT}_{r-1,\varepsilon}(\mu,\nu)$. Therefore by lifting the solution of $\text{LOT}_{r-1,\varepsilon}(\mu,\nu)$ with a null vector, we obtain another solution of $\text{LOT}_{r,\varepsilon}(\mu,\nu)$ which is impossible by strong convexity.
% \end{proof}

%%%%
\paragraph{Stabilized Formulation using Lower Bounds} In order to ensure stability of the mirror descent method, and enable its theoretical analysis, we introduce a lower bound $\alpha$ on the weight vector $g$.

Let us assume in the following that we consider $(r,\varepsilon)$ satisfying the conditions of Proposition~\ref{prop:existence-min}. In particular if $\varepsilon=0$, $r$ can be arbitrarily chosen and we recover the problem defined in~(\ref{eq-LOT}). Under this assumption, there exists  $(Q^{*}_{\varepsilon},R^{*}_{\varepsilon},g_{\varepsilon}^{*})\in\mathcal{C}_1(a,b,r)\cap \mathcal{C}_2(r)$ solution of Eq.~(\ref{eq-LOT-reformulated-ent}) from which follows the existence of $\frac{1}{r}\geq \alpha^{*}>0$, such that $g_{\varepsilon}^{*}\geq \alpha^{*}$ coordinate-wise. Let us now define for any $\frac{1}{r}\geq \alpha> 0$, the following set 
\begin{align*}
\mathcal{C}_1(a,b,r,\alpha) \eqdef 
 \begin{aligned}[t]
  \Big\{
  &(Q,R,g)\in\mathbb{R}_{+}^{n\times r}\times\mathbb{R}_{+}^{m\times r}\times\mathbb{R}_{+}^{r} \\
    & \text{ s.t. }  Q\mathbf{1}_r=a, R\mathbf{1}_r=b,~g\geq \alpha
  \Big\}.
 \end{aligned}
 \end{align*}
Then if $\alpha$ is sufficiently small (i.e. $\alpha\leq\alpha^{*}$) we have that the problem~(\ref{eq-LOT-reformulated-ent}) is equivalent to
\begin{multline}
\label{eq-LOT-reformulated-alpha}
\text{LOT}_{r,\varepsilon,\alpha}(\mu,\nu)= \min_{(Q,R,g)\in\mathcal{C}(a,b,r,\alpha)} \langle C,Q \Diag(1/g)R^T\rangle\\
 -\varepsilon H((Q,R,g)),
\end{multline}
where $\mathcal{C}(a,b,r,\alpha) \eqdef \mathcal{C}_1(a,b,r,\alpha) \cap \mathcal{C}_2(r)$. Note that for any $\frac{1}{r}\geq \alpha>0$, the set of constraints is not empty, compact and the minimum always exists. 

\subsection{Mirror Descent Optimization Scheme}

\paragraph{Mirror descent outer loop.} 

We propose to use a Mirror Descent scheme with a \text{KL} divergence to solve Eq.~(\ref{eq-LOT-reformulated-alpha}). It leads, for all $k\geq 0$, to the following updates which necessitate the solution of a convex problem at each step 
\begin{equation}
\label{eq-barycenter-LOT-alpha}  
 (Q_{k+1},R_{k+1},g_{k+1}) \eqdef \!\! \argmin_{\bm{\zeta} \in\mathcal{C}(a,b,r,\alpha)} \!\! \text{KL}(\bm{\zeta},\bm{\xi}_k)
\end{equation}
where $(Q_0,R_0,g_0)\in\mathcal{C}(a,b,r,\alpha)$ is an initial point such that $Q_0>0$ and $R_0>0$,
$\bm{\xi}_k \eqdef (\xi_{k}^{(1)},\xi_{k}^{(2)},\xi_{k}^{(3)})$, $\xi_{k}^{(1)} \eqdef \exp(-\gamma_kCR_k \Diag(1/g_k)- (\gamma_k\varepsilon-1)\log(Q_k))$, $\xi_{k}^{(2)} \eqdef \exp(-\gamma_kC^TQ_k \Diag(1/g_k)- (\gamma_k\varepsilon-1)\log(R_k))$,
$\xi_{k}^{(3)} \eqdef \exp(\gamma_k\omega_k/g_k^2- (\gamma_k\varepsilon-1)\log(g_k))$ with  $[\omega_k]_i \eqdef [Q_k^TCR_k]_{i,i}$ for all $i\in\{1,\dots,r\}$ and $(\gamma_k)_{k\geq 0}$ is a sequence of positive step sizes. Note that for all $k\geq 0$, $(Q_{k},R_{k},g_{k})$ live in $(\mathbb{R}_{+}^{*})^{n\times r}\times (\mathbb{R}_{+}^{*})^{m\times r} \times (\mathbb{R}_{+}^{*})^{r}$, and therefore $\bm{\xi}_k$ is well defined and lives also in $(\mathbb{R}_{+}^{*})^{n\times r}\times (\mathbb{R}_{+}^{*})^{m\times r} \times (\mathbb{R}_{+}^{*})^{r}$.

\paragraph{Dykstra's inner loop.} In order to solve Eq.~(\ref{eq-barycenter-LOT-alpha}), we use the Dykstra's Algorithm~\cite{dykstra1983algorithm}. Given a closed convex set $\mathcal{C}\subset \mathbb{R}_{+}^{n\times r}\times \mathbb{R}_{+}^{m\times r} \times \mathbb{R}_{+}^{r}$, we denote for all $\bm{\xi}\in (\mathbb{R}_{+}^{*})^{n\times r}\times (\mathbb{R}_{+}^{*})^{m\times r} \times (\mathbb{R}_{+}^{*})^{r}$ the projection according to the Kullback-Leibler divergence as
\begin{align*}
    \mathcal{P}_{\mathcal{C}}^{\text{KL}}(\bm{\xi}) \eqdef \argmin_{\bm{\zeta}\in\mathcal{C}}\text{KL}(\bm{\zeta},\bm{\xi}).
\end{align*}
Starting from $\bm{\zeta}_0 \eqdef \bm{\xi}$ and $\bm{q}_{0}=\bm{q}_{-1}=(\mathbf{1},\mathbf{1},\mathbf{1})\in \mathbb{R}_{+}^{n\times r}\times \mathbb{R}_{+}^{m\times r} \times \mathbb{R}_{+}^{r}$, the Dykstra's Algorithm consists in computing for all $j\geq 0$, 
\begin{align*}
    \bm{\zeta}_{2j+1} &= \mathcal{P}_{\mathcal{C}_1(a,b,r,\alpha)}^{\text{KL}}(\bm{\zeta}_{2j}\odot \bm{q}_{2j-1})\\
    \bm{q}_{2j+1}& =  \bm{q}_{2j-1}\odot \frac{\bm{\zeta}_{2j}}{\bm{\zeta}_{2j+1}} \\
    \bm{\zeta}_{2j+2} &= \mathcal{P}_{\mathcal{C}_2(r)}^{\text{KL}}(\bm{\zeta}_{2j+1}\odot \bm{q}_{2j})\\
    \bm{q}_{2j+2}& =  \bm{q}_{2j} \odot \frac{\bm{\zeta}_{2j+1}}{\bm{\zeta}_{2j+2}}.
\end{align*}
As $\mathcal{C}_1(a,b,r,\alpha)$ and $\mathcal{C}_2(r)$ are closed convex subspaces and $\bm{\xi}\in (\mathbb{R}_{+}^{*})^{n\times r}\times (\mathbb{R}_{+}^{*})^{m\times r} \times (\mathbb{R}_{+}^{*})^{r}$, one can show that $(\bm{\zeta}_{j})_{j\geq 0}$ converges towards the unique solution of Eq.~(\ref{eq-barycenter-LOT-alpha}),~\cite{bauschke2000dykstras}. The following propositions detail how to compute the relevant projections involved in the Dykstra's Algorithm.

\begin{prop}
\label{prop:proj-C1}
For $\tilde{\bm{\xi}} \eqdef (\tilde{Q},\tilde{R},\tilde{g})\in (\mathbb{R}_{+}^{*})^{n\times r}\times (\mathbb{R}_{+}^{*})^{n\times r} \times (\mathbb{R}_{+}^{*})^{r}$, one has, denoting $\hat g \eqdef \max(\tilde{g},\alpha)$
\begin{align*}
    \mathcal{P}_{\mathcal{C}_1(a,b,r,\alpha)}^{\text{KL}}(\tilde{\bm{\xi}})=
    \left( \Diag\left(\frac{a}{\tilde{Q}\mathbf{1}_r}\right)\tilde{Q}, \Diag\left(\frac{b}{\tilde{R}\mathbf{1}_r}\right)\tilde{R},\hat g\right).
\end{align*}
\end{prop}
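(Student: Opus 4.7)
The plan is to exploit the separable structure of both the KL divergence and the constraints defining $\mathcal{C}_1(a,b,r,\alpha)$. Since the divergence decomposes as
\begin{equation*}
\text{KL}((Q,R,g),(\tilde Q,\tilde R,\tilde g)) = \text{KL}(Q,\tilde Q) + \text{KL}(R,\tilde R) + \text{KL}(g,\tilde g),
\end{equation*}
and the three constraints $Q\mathbf{1}_r = a$, $R\mathbf{1}_r = b$, $g\geq \alpha$ act on disjoint blocks of variables, the projection reduces to three independent sub-projections. I would state this decomposition first, then solve each piece separately.

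For the $Q$-block I would minimize $\text{KL}(Q,\tilde Q)$ subject to $Q\mathbf{1}_r = a$. Since $\text{KL}(Q,\tilde Q) = \sum_i \sum_j Q_{ij}(\log(Q_{ij}/\tilde Q_{ij}) - 1)$ further decomposes row by row, this splits into $n$ one-dimensional constrained problems, one per row $i$, of the form: minimize $\sum_j Q_{ij}(\log(Q_{ij}/\tilde Q_{ij}) - 1)$ subject to $\sum_j Q_{ij} = a_i$. Writing the Lagrangian with multiplier $\lambda_i$ and setting the gradient to zero yields $Q_{ij} = \tilde Q_{ij} e^{-\lambda_i}$; enforcing the row-sum constraint fixes $e^{-\lambda_i} = a_i / (\tilde Q \mathbf{1}_r)_i$, which gives exactly $Q = \Diag(a/(\tilde Q \mathbf{1}_r))\tilde Q$. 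The same argument applied to $R$ with marginal $b$ yields the symmetric formula.

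For the $g$-block I would minimize $\text{KL}(g,\tilde g) = \sum_i g_i(\log(g_i/\tilde g_i) - 1)$ subject to $g_i \geq \alpha$, which is fully separable in $i$. Differentiating with respect to $g_i$ gives $\log(g_i/\tilde g_i)$, so the unconstrained minimum is attained at $g_i = \tilde g_i$; by strict convexity of the scalar objective on $\mathbb{R}_+^*$, the projection onto the half-line $[\alpha,+\infty)$ is simply $\hat g_i = \max(\tilde g_i,\alpha)$. Assembling the three closed-form solutions gives the claimed formula.

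The only subtle point is that the ingredients $\tilde Q,\tilde R,\tilde g$ are required to lie in $(\mathbb{R}_+^*)^{\cdot}$ so that the KL divergence is finite and well-defined and the row sums $\tilde Q \mathbf{1}_r$, $\tilde R \mathbf{1}_r$ are strictly positive; this hypothesis is exactly the standing assumption in the proposition statement and guarantees that the divisions in the formula are legitimate. Beyond that check, the proof is a routine separable-minimization computation with no real obstacle.
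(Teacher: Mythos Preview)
Your proof is correct and follows essentially the same approach as the paper: both exploit the separability of the KL objective and the constraints, then solve each block via first-order (Lagrangian/KKT) conditions to obtain the row-rescaling formulas for $Q,R$ and the clipping $\hat g=\max(\tilde g,\alpha)$. Your treatment of the $g$-block via strict convexity of the scalar map $t\mapsto t(\log(t/\tilde g_i)-1)$ is a slightly cleaner variant of the paper's KKT argument with a nonnegative multiplier, but the underlying idea is the same.
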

% \begin{align*}
%     &\mathcal{P}_{\mathcal{C}_1(a,b,r,\alpha)}^{\text{KL}}(\tilde{\bm{\xi}})=\\
%     &\left( \Diag\left(\frac{a}{\tilde{Q}\mathbf{1}_r}\right)\tilde{Q}, \Diag\left(\frac{b}{\tilde{R}\mathbf{1}_r}\right)\tilde{R},\max(\tilde{g},\alpha)\right).
% \end{align*}
Let us now show the solution of the projection on $\mathcal{C}_2(r)$.
\begin{prop}
\label{prop:proj-C2}
For $\tilde{\bm{\xi}} \eqdef (\tilde{Q},\tilde{R},\tilde{g})\in (\mathbb{R}_{+}^{*})^{n\times r}\times (\mathbb{R}_{+}^{*})^{n\times r} \times (\mathbb{R}_{+}^{*})^{r}$, the projection $(Q,R,g)=\mathcal{P}_{\mathcal{C}_2(r)}^{\text{KL}}(\tilde{\bm{\xi}})$ satisfies 
\begin{align*}
Q&=\tilde{Q} \Diag(g/\tilde{Q}^T\mathbf{1}_n),\quad
R = \tilde{R} \Diag(g/\tilde{R}^T\mathbf{1}_m)\\
g &= (\tilde{g}\odot \tilde{Q}^T\mathbf{1}_n \odot \tilde{R}^T\mathbf{1}_m)^{1/3}.
\end{align*}
\end{prop}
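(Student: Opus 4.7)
The plan is to attack this via Lagrange duality, since KL is strictly convex and smooth on the positive orthant, the constraints in $\mathcal{C}_2(r)$ are linear equalities ($Q^T\mathbf{1}_n = g$ and $R^T\mathbf{1}_m = g$), and the nonnegativity constraints will turn out not to bind because $\tilde{\bm\xi}$ is strictly positive (so the projected iterate will be strictly positive as well). I would begin by noting that the KL objective is separable across the three blocks,
\begin{equation*}
\text{KL}((Q,R,g),\tilde{\bm\xi}) = \text{KL}(Q,\tilde{Q}) + \text{KL}(R,\tilde{R}) + \text{KL}(g,\tilde{g}),
\end{equation*}
so I can write a single Lagrangian with multipliers $\lambda,\mu\in\mathbb{R}^r$ for the two vector equalities.

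Next I would compute the first-order stationarity conditions. Differentiating with respect to $Q_{ij}$, $R_{ij}$ and $g_i$ respectively yields
\begin{equation*}
Q_{ij} = \tilde{Q}_{ij}e^{-\lambda_j}, \quad R_{ij} = \tilde{R}_{ij}e^{-\mu_j}, \quad g_i = \tilde{g}_i e^{\lambda_i+\mu_i},
\end{equation*}
which are strictly positive, confirming that the nonnegativity constraints are inactive. The key step is then to plug these expressions back into the two equality constraints: summing the formula for $Q_{ij}$ over $i$ forces $e^{-\lambda_j} = g_j/(\tilde{Q}^T\mathbf{1}_n)_j$, and analogously $e^{-\mu_j} = g_j/(\tilde{R}^T\mathbf{1}_m)_j$. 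Substituting these two identities into $g_j = \tilde{g}_j e^{\lambda_j+\mu_j}$ gives the cubic relation
\begin{equation*}
g_j^3 = \tilde{g}_j \,(\tilde{Q}^T\mathbf{1}_n)_j\,(\tilde{R}^T\mathbf{1}_m)_j,
\end{equation*}
from which $g = (\tilde{g}\odot\tilde{Q}^T\mathbf{1}_n\odot\tilde{R}^T\mathbf{1}_m)^{1/3}$. Re-injecting $e^{-\lambda_j}$ and $e^{-\mu_j}$ into the formulas for $Q$ and $R$ yields the two claimed column-rescaling identities $Q=\tilde{Q}\Diag(g/\tilde{Q}^T\mathbf{1}_n)$ and $R=\tilde{R}\Diag(g/\tilde{R}^T\mathbf{1}_m)$.

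Finally I would justify that this stationary point is the unique global minimum. The objective is strictly convex (jointly) on the strictly positive orthant, the feasible set $\mathcal{C}_2(r)$ is a closed convex polyhedron, and $\tilde{\bm\xi}$ lies in the relative interior of the effective domain of KL, so Slater's condition and standard KKT theory apply: the KKT point we constructed is then necessarily the unique projection. The only subtlety worth flagging is the factor $1/3$ exponent in $g$, which arises solely from the fact that $g$ appears on the right-hand side of \emph{both} coupling constraints and simultaneously in the divergence term; a quick sanity check is that when one formally removes the entropy on $g$ and sets $\tilde g \equiv 1$, the formula collapses to the familiar geometric-mean barycenter expression. I do not anticipate a serious obstacle, since the computation is a routine Lagrangian exercise; the only nontrivial algebraic move is the resolution of the cubic equation for $g$.
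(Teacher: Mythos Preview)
Your proposal is correct and follows essentially the same route as the paper: write the first-order (KKT) conditions with two Lagrange multipliers in $\mathbb{R}^r$ for the constraints $Q^T\mathbf{1}_n=g$ and $R^T\mathbf{1}_m=g$, solve to obtain column rescalings of $\tilde{Q},\tilde{R}$ and the relation $g=\tilde{g}\,e^{\lambda+\mu}$, then eliminate the multipliers via the constraints to reach the cubic $g^3=\tilde{g}\odot\tilde{Q}^T\mathbf{1}_n\odot\tilde{R}^T\mathbf{1}_m$. Your added justification of uniqueness via strict convexity and Slater is more explicit than the paper's terse derivation, but the argument is the same.
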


\paragraph{Efficient computation of the updates.} The projection obtained in Proposition~\ref{prop:proj-C1},~\ref{prop:proj-C2} lead to simple updates of the couplings. Indeed, starting with $\bm\zeta_0 \eqdef \bm{\xi}=(\xi^{(1)},\xi^{(2)},\xi^{(3)})$ the Dysktra's Algorithm applied to our problem~\eqref{eq-barycenter-LOT-alpha} needs only to compute scaling vectors as presented in Alg.~\ref{alg-Dykstra}. We have denoted $p_1 \eqdef a$ and $p_2 \eqdef b$ to simplify the notations. See Appendix~\ref{sec-Dykstra} for more details.
\begin{algorithm}[H]
\SetAlgoLined
\textbf{Inputs:} $\xi^{(1)},\xi^{(2)},\tilde{g} \eqdef \xi^{(3)},p_1,p_2,\alpha,\delta,q^{(3)}_1=q^{(3)}_2=\mathbf{1}_r,\forall i\in\{1,2\},~ \tilde{v}^{(i)}=\mathbf{1}_r, q^{(i)}=\mathbf{1}_r$\\
\Repeat{$\sum_{i=1}^2\|u^{(i)}\odot \xi^{(i)}v^{(i)} - p_i\|_1 <\delta$}{
    $u^{(i)}\gets p_i/\xi^{(i)}\tilde{v}^{(i)}~\forall i\in\{1,2\},\\
    g\gets \max(\alpha,\tilde{g}\odot q^{(3)}_1),~
    q^{(3)}_1\gets (\tilde{g}\odot q^{(3)}_1)/ g,~\tilde{g}\gets g,\\
    g\gets (\tilde{g}\odot q^{(3)}_2)^{1/3} \prod_{i=1}^2 (v^{(i)}\odot q^{(i)}\odot(\xi^{(i)})^Tu^{(i)})^{1/3},\\ v^{(i)}\gets g/(\xi^{(i)})^T u^{(i)} ~\forall i\in\{1,2\},\\
    q^{(i)}\gets (\tilde{v}^{(i)}\odot q^{(i)})/v^{(i)}~\forall i\in\{1,2\},~q^{(3)}_2 \gets (\tilde{g}\odot q^{(3)}_2)/g,\\
    \tilde{v}^{(i)}\gets v^{(i)}~\forall i\in\{1,2\},~\tilde{g}\gets g
    $
  }
$Q\gets  \Diag(u^{(1)})\xi_{k}^{(1)} \Diag(v^{(1)})$\\
$R\gets  \Diag(u^{(2)})\xi_{k}^{(2)} \Diag(v^{(2)})$\\
\textbf{Result:} $Q,R,g$
\caption{$\text{LR-Dykstra}((\xi^{(i)})_{1\leq i\leq 3},p_1,p_2,\alpha,\delta)$ \label{alg-Dykstra}}
\end{algorithm}

Let us now introduce the proposed MD algorithm applied to~(\ref{eq-LOT-reformulated-alpha}). By denoting $\mathcal{D}(\cdot)$ the operator extracting the diagonal of a square matrix we  obtain  Alg.~\ref{alg-MDLROT-alpha}.

\begin{algorithm}[H]
\SetAlgoLined
\textbf{Inputs:} $C,a,b,(\gamma_k)_{k\geq 0}, Q,R,g,\alpha,\delta$\\
\For{$k=1,\dots$}{
    $\xi^{(1)}\gets\exp(-\gamma_kCR \Diag(1/g)- (\gamma_k\varepsilon-1)\log(Q)),\\ \xi^{(2)}\gets\exp(-\gamma_kC^TQ \Diag(1/g)- (\gamma_k\varepsilon-1)\log(R)),\\
    \omega\gets \mathcal{D}(Q^TCR),\\
    \xi^{(3)}\gets\exp(\gamma_k\omega/g^2- (\gamma_k\varepsilon-1)\log(g)),\\
    Q,R,g\gets \text{LR-Dykstra}((\xi^{(i)})_{1\leq i\leq 3},a,b,\alpha,\delta)~(\text{Alg.~\ref{alg-Dykstra})}$
  }
\textbf{Result:} $\langle C,Q \Diag(1/g)R^T\rangle$
\caption{$\text{LOT}(C,a,b,r,\alpha,\delta)$ \label{alg-MDLROT-alpha}}
\end{algorithm}

\paragraph{Computational Cost.} Note that $(\xi^{(i)})_{1\leq i\leq 3}$ considered in Alg.~\ref{alg-MDLROT-alpha} live in $\mathbb{R}_{+}^{n\times r}\times\mathbb{R}_{+}^{m\times r} \times\mathbb{R}_{+}^{r} $ and therefore given those matrices, each iteration of Alg.~\ref{alg-Dykstra} requires $\mathcal{O}((n+m)r)$  algebraic operations, since it involves only matrix/vector multiplications of the form $\xi^{(i)}v_i$ and $(\xi^{(i)})^Tu_i$. However without any assumption on the cost matrix $C$, computing $(\xi^{(i)})_{1\leq i\leq 3}$ requires $\mathcal{O}(nmr)$ algebraic operations since $CR$ and $C^TQ$ must be evaluated. We show in \S\ref{sec-OT-linear} how to reduce the quadratic cost of computing $(\xi^{(i)})_{1\leq i\leq 3}$ to a linear cost with respect to the number of samples if one assumes that the considered \textit{cost} matrix can be factored, either exactly (ensured with a squared Euclidean distance cost) or approximately if that cost is a distance. Writing $N$ the number of iterations of the MD scheme and $T$ the number of iterations considered in Algorithm~{\ref{alg-Dykstra}} at each step of the MD, we end up with a total computational cost of $\mathcal{O}(NT(n+m)r + Nnmr)$.

%%%%%%%%%%%%%%%%%%%%%%%%%%%%%%%%%%%%%%%%%%%%%%%%
\subsection{Convergence of the Mirror Descent} 

Even if the objective~(\ref{eq-LOT-reformulated-alpha}) is not convex in $(Q,R,g)$, we obtain the non-asymptotic stationary convergence of the MD algorithm in this setting. For that purpose we introduce a stronger convergence criterion than the one presented in~\cite{ghadimi2013minibatch} to obtain non-asymptotic stationary convergence of the MD scheme. Indeed let $F_{\varepsilon}$ be the objective function of the problem~(\ref{eq-LOT-reformulated-alpha}) defined on $\mathcal{C}(a,b,r,\alpha)$ and let us denotes for any $\gamma>0$ and $\bm{\xi}\in \mathcal{C}(a,b,r,\alpha)$ 
\begin{align*}
    \mathcal{G}_{\varepsilon,\alpha}(\bm{\xi},\gamma) \eqdef \argmin_{\bm{\zeta}\in \mathcal{C}(a,b,r,\alpha)}\{ \langle \nabla F_{\varepsilon}(\bm{\xi}),\bm{\zeta} \rangle +\frac{1}{\gamma} KL(\bm{\zeta},\bm{\xi}) \}.
\end{align*}
Then the criteron used in~\cite{ghadimi2013minibatch} to show the stationary convergence of the MD scheme is defined as the square norm of the following vector:
\begin{align*}
   P_{\mathcal{C}(a,b,r,\alpha)}(\bm{\xi},\gamma) \eqdef \frac{1}{\gamma}(\bm{\xi}-\mathcal{G}_{\varepsilon,\alpha}(\bm{\xi},\gamma)).
\end{align*}
This vector can be seen as a generalized projected gradient of $F_\varepsilon$ at $\bm{\xi}$. Indeed if $X = \mathbb{R}^d$ and by replacing the \emph{prox-function} $\text{KL}(u,x)$ by $\frac{1}{2}\Vert u-x\Vert_2^2$, we would have $P_{X}(x,\gamma)=\nabla F_\varepsilon(x)$. Here we consider instead the following criterion to establish convergence:
\begin{align*}
   \Delta_{\varepsilon,\alpha}(\bm{\xi},\gamma) \eqdef \frac{1}{\gamma^2}(\mathrm{KL}(\bm{\xi},\mathcal{G}_{\varepsilon,\alpha}(\bm{\xi},\gamma))+\mathrm{KL}(\mathcal{G}_{\varepsilon,\alpha}(\bm{\xi},\gamma),\bm{\xi})).
\end{align*}
Such criterion is in fact stronger than the one used in~\cite{ghadimi2013minibatch} as we have
\begin{align*}
    \Delta_{\varepsilon,\alpha}(\bm{\xi},\gamma)&=\frac{1}{\gamma^2}(\langle  \nabla h(\mathcal{G}_{\varepsilon,\alpha}(\bm{\xi},\gamma)) - \nabla h(\bm{\xi}),  \mathcal{G}_{\varepsilon,\alpha}(\bm{\xi},\gamma) - \bm{\xi}\rangle\\
    &\geq \frac{1}{2\gamma^2}  \Vert \mathcal{G}_{\varepsilon,\alpha}(\bm{\xi},\gamma) - \bm{\xi}\Vert_1^2\\
    &=\frac{1}{2}\Vert  P_{\mathcal{C}(a,b,r,\alpha)}(\bm{\xi},\gamma)\Vert_1^2
\end{align*}
where $h$ denotes the minus entropy function and the last inequality comes from the strong convexity of $h$ on $\mathcal{C}(a,b,r,\alpha)$.  

For any $\frac{1}{r}\geq \alpha> 0$, we show in the following proposition the non-asymptotic stationary convergence of the MD scheme applied to the problem~(\ref{eq-LOT-reformulated-alpha}). To prove this result, we show that for any $\varepsilon\geq 0$, the objective is smooth relatively to the negative entropy function~\cite{bauschke2017descent} and we extend the proof of~\cite{ghadimi2013minibatch} to this case. 
% Let us now introduce our criterion to show non-asymptotic stationary convergence of the MD scheme in this setting. Let $F_{\varepsilon}$ be the objective function of the problem~(\ref{eq-LOT-reformulated-alpha}) defined on $X_{\alpha} \eqdef \mathcal{C}_1(a,b,r,\alpha)\cap \mathcal{C}_2(r)$ and let us denotes for any $\gamma>0$ and $x\in X_{\alpha}$
% \begin{align*}
%   \Delta_{\varepsilon,\alpha}(x,\gamma) \eqdef \frac{1}{\gamma^2}(KL(x,\mathcal{G}_{\varepsilon,\alpha}(x,\gamma))+KL(\mathcal{G}_{\varepsilon,\alpha}(x,\gamma),x)).
% \end{align*}
% where
% \begin{align*}
%  \mathcal{G}_{\varepsilon,\alpha}(x,\gamma) \eqdef \argmin_{u\in X_{\alpha}}\{ \langle \nabla F_{\varepsilon}(x),u \rangle +\frac{1}{\gamma} KL(u,x) \}
% \end{align*}
% For any $\frac{1}{r}\geq \alpha> 0$, we show in the next proposition the non-asymptotic stationary convergence of the MD scheme applied to the problem~(\ref{eq-LOT-reformulated-alpha}).
\begin{prop}
\label{prop:cvg-MD-Dykstra}
Let $\varepsilon\geq 0$, $\frac{1}{r}\geq \alpha> 0$ and $N\geq 1$. By denoting $$L_{\varepsilon,\alpha} \eqdef \sqrt{3\left(2\frac{\Vert C\Vert_2^2}{\alpha^4}+\left(\frac{\varepsilon+2\Vert C\Vert_2}{\alpha^3}\right)^2\right)}$$
and by considering a constant stepsize in the MD scheme~(\ref{eq-barycenter-LOT-alpha}) such that for all $k=1,\dots,N$ $\gamma_k=\frac{1}{2L_{\varepsilon,\alpha}}$, we obtain that
\begin{align*}
    \min_{1\leq k\leq N}\Delta_{\varepsilon,\alpha}((Q_k,R_k,g_k),\gamma_k)\leq \frac{4L_{\varepsilon,\alpha}  D_0}{N}.
\end{align*}
where $D_0 \eqdef  F_{\varepsilon}(Q_0,R_0,g_0)- \mathrm{LOT}_{r,\varepsilon,\alpha}$ is the distance of the initial value to the optimal one. 
\end{prop}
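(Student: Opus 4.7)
The plan is to show that $F_{\varepsilon}$ is smooth relative to the negative entropy prox-function $h$ on $\mathcal{C}(a,b,r,\alpha)$ with constant $L_{\varepsilon,\alpha}$, and then adapt the non-convex Bregman descent argument of~\cite{ghadimi2013minibatch}. Relative smoothness amounts to establishing, for every $\bm{\xi},\bm{\zeta}\in\mathcal{C}(a,b,r,\alpha)$,
\begin{equation*}
F_{\varepsilon}(\bm{\zeta})\leq F_{\varepsilon}(\bm{\xi})+\langle \nabla F_{\varepsilon}(\bm{\xi}),\bm{\zeta}-\bm{\xi}\rangle+L_{\varepsilon,\alpha}\,\mathrm{KL}(\bm{\zeta},\bm{\xi}).
\end{equation*}
I would prove this by splitting $F_{\varepsilon}=\phi-\varepsilon H$ with $\phi(Q,R,g)=\langle C,Q\Diag(1/g)R^{T}\rangle$, then bounding the $\ell_{1}$--$\ell_{\infty}$ Lipschitz constants of the three partial gradients $\nabla_{Q}\phi=CR\Diag(1/g)$, $\nabla_{R}\phi=C^{T}Q\Diag(1/g)$, and $\nabla_{g}\phi=-\omega/g^{2}$. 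The constraints $Q\mathbf{1}_{r}=a$, $R\mathbf{1}_{r}=b$, $Q^{T}\mathbf{1}_{n}=R^{T}\mathbf{1}_{m}=g$ force $\|Q\|_{1}=\|R\|_{1}=\|g\|_{1}=1$, so each block is simplex-like and Pinsker's inequality $\mathrm{KL}(\bm{\zeta},\bm{\xi})\geq\tfrac{1}{2}\|\bm{\zeta}-\bm{\xi}\|_{1}^{2}$ lifts an $\ell_{1}$-quadratic upper bound to the required KL bound. The $\alpha^{-2},\alpha^{-3},\alpha^{-4}$ factors in $L_{\varepsilon,\alpha}$ come from the $1/g$, $1/g^{2}$, $1/g^{3}$ factors met while differentiating, the $\varepsilon$ contribution comes from $-\varepsilon\nabla H$, and the factor $3$ inside the root arises from an $(a+b+c)^{2}\leq 3(a^{2}+b^{2}+c^{2})$ combination of the three blocks.

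Once relative smoothness is in hand, the rest is algebraic. The descent lemma gives $F_{\varepsilon}(\bm{\xi}_{k+1})-F_{\varepsilon}(\bm{\xi}_{k})\leq \langle \nabla F_{\varepsilon}(\bm{\xi}_{k}),\bm{\xi}_{k+1}-\bm{\xi}_{k}\rangle+L_{\varepsilon,\alpha}\mathrm{KL}(\bm{\xi}_{k+1},\bm{\xi}_{k})$, and the first-order optimality of $\bm{\xi}_{k+1}=\mathcal{G}_{\varepsilon,\alpha}(\bm{\xi}_{k},\gamma_{k})$ paired with the Bregman three-point identity at test point $\bm{\zeta}=\bm{\xi}_{k}$ gives $\langle \nabla F_{\varepsilon}(\bm{\xi}_{k}),\bm{\xi}_{k+1}-\bm{\xi}_{k}\rangle\leq -\tfrac{1}{\gamma_{k}}[\mathrm{KL}(\bm{\xi}_{k+1},\bm{\xi}_{k})+\mathrm{KL}(\bm{\xi}_{k},\bm{\xi}_{k+1})]$. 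Summing the two and plugging in $\gamma_{k}=1/(2L_{\varepsilon,\alpha})$ collapses the inequality to
\begin{equation*}
F_{\varepsilon}(\bm{\xi}_{k+1})-F_{\varepsilon}(\bm{\xi}_{k})\leq -L_{\varepsilon,\alpha}\bigl[\mathrm{KL}(\bm{\xi}_{k+1},\bm{\xi}_{k})+\mathrm{KL}(\bm{\xi}_{k},\bm{\xi}_{k+1})\bigr]=-\tfrac{1}{4L_{\varepsilon,\alpha}}\,\Delta_{\varepsilon,\alpha}(\bm{\xi}_{k},\gamma_{k}).
\end{equation*}
Telescoping over the $N$ iterations and bounding $F_{\varepsilon}(\bm{\xi}_{0})-F_{\varepsilon}(\bm{\xi}_{N})\leq D_{0}$ yields $\sum_{k}\Delta_{\varepsilon,\alpha}(\bm{\xi}_{k},\gamma_{k})\leq 4L_{\varepsilon,\alpha}D_{0}$, and $\min\leq$ average produces the stated bound.

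The hard part is the relative smoothness step, not the descent argument. $F_{\varepsilon}$ is non-convex, its Hessian is highly non-diagonal because $Q\Diag(1/g)R^{T}$ genuinely couples all three blocks, and the coupling blows up as any $g_{i}\to 0$. The lower bound $g\geq\alpha$ built into $\mathcal{C}(a,b,r,\alpha)$ is exactly what permits uniform control, and the explicit $\alpha^{-k}$ scaling in $L_{\varepsilon,\alpha}$ is the quantitative price paid for that control. A secondary subtlety is that the prox-function $h$ here is the concatenation of three block entropies rather than a single one on the simplex, so one must verify the standard three-point identity and the Bregman descent lemma still apply on the product structure; this is routine once one notes that $D_{h}(\bm{\zeta},\bm{\xi})$ factors over the three blocks.
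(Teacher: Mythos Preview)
Your overall architecture matches the paper exactly: establish that $F_{\varepsilon}$ is $L_{\varepsilon,\alpha}$-smooth relative to the negative entropy on $\mathcal{C}(a,b,r,\alpha)$, then run the nonconvex Bregman-descent telescoping argument. Your second half (optimality of $\bm{\xi}_{k+1}$ at test point $\bm{\xi}_k$ giving $\langle\nabla F_\varepsilon(\bm{\xi}_k),\bm{\xi}_{k+1}-\bm{\xi}_k\rangle\le -\tfrac{1}{\gamma_k}[\mathrm{KL}(\bm{\xi}_{k+1},\bm{\xi}_k)+\mathrm{KL}(\bm{\xi}_k,\bm{\xi}_{k+1})]$, then summing) is line-for-line the paper's general mirror-descent lemma.

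Where you diverge is in \emph{how} relative smoothness is certified. You split $F_\varepsilon=\phi-\varepsilon H$, control the $\ell_1$--$\ell_\infty$ Lipschitz constant of $\nabla\phi$ on the constraint set, and invoke Pinsker to pass from an $\ell_1$-quadratic bound to a KL bound. The paper instead uses the $\ell_2$ sufficient condition of \cite{lu2017relativelysmooth,zhang2020wasserstein}: it shows directly that
\[
\|\nabla F_\varepsilon(\bm{\xi}_1)-\nabla F_\varepsilon(\bm{\xi}_2)\|_2\le L_{\varepsilon,\alpha}\,\|\nabla h(\bm{\xi}_1)-\nabla h(\bm{\xi}_2)\|_2,
\]
bounding each block of $\nabla F_\varepsilon$ (including the $\varepsilon\log(\cdot)$ terms) against $\|\log Q_1-\log Q_2\|_2$, $\|\log R_1-\log R_2\|_2$, $\|\log g_1-\log g_2\|_2$, and using $1$-strong convexity of $h$ to convert $\|Q_1-Q_2\|_2\le\|\log Q_1-\log Q_2\|_2$, together with $\|1/g_1-1/g_2\|_2\le\alpha^{-2}\|\log g_1-\log g_2\|_2$. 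The factor $3$ and the $\alpha^{-3},\alpha^{-4}$ powers arise there from a three-term Cauchy--Schwarz on the blocks, exactly as you anticipated.

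Both routes are sound. Yours is arguably cleaner conceptually (the entropic piece $\varepsilon h$ is trivially $\varepsilon$-relatively-smooth, so only $\phi$ needs work), but it will not naturally produce the \emph{specific} constant $L_{\varepsilon,\alpha}$ in the statement: Pinsker plus an $\ell_1$ Hessian bound on $\phi$ yields something of the shape $L_\phi+\varepsilon$ rather than $\sqrt{3(\cdots+( (\varepsilon+2\|C\|_2)/\alpha^3)^2)}$. The paper's $\ell_2$ route, by keeping the $\varepsilon\log$ terms inside $\nabla F_\varepsilon$ and squaring, is what generates that exact expression. So your argument proves the proposition with a (possibly sharper) alternative constant; to recover the stated $L_{\varepsilon,\alpha}$ verbatim you would need to switch to the paper's $\ell_2$ gradient-comparison criterion.
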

Thanks to Proposition~\ref{prop:cvg-MD-Dykstra}, for $\alpha$ sufficiently small (i.e. $\alpha\leq \alpha^{*}$), we have $\text{LOT}_{r,\varepsilon,\alpha}=\text{LOT}_{r,\varepsilon}$ and therefore we obtain a stationary point of~(\ref{eq-LOT-reformulated-ent}). In particular, if $\varepsilon=0$, the proposed algorithm converges towards a stationary point of~(\ref{eq-LOT}).

\begin{rmq}
We also propose an algorithm to directly solve~(\ref{eq-LOT-reformulated-ent}). The main difference is that the updates of the MD can be solved using the Iterative Bregman Projections (IBP) Algorithm. See Appendix~\ref{sec-IBP} for more details.
\end{rmq}

\begin{rmq}
For all $\varepsilon\geq 0$, the MD scheme implies that each iteration $k$ of our proposed algorithm outputs $(Q_k,R_k,g_k)\in\mathcal{C}_1(a,b,r,\alpha)\cap\mathcal{C}_2(r)$, and therefore the matrix obtained a each iteration $P_{k}^{\textit{LOT}}=Q_k\Diag(1/g_k)R_k^T$ is a coupling which sastifies the marginal constraints while in the Sinkhorn algorithm, the matrix defined at each iteration by $P_{k}^{\text{Sin}}=\Diag(u_k)K\Diag(v_k)$ becomes a coupling which satisfies the marginal constraints only at convergence.
\end{rmq}

In the following section, we aim at accelerating our method in order to obtain a linear time algorithm to solve~(\ref{eq-LOT-reformulated-ent}).

% Note that here the updates of the MD scheme are 
% \begin{equation}
% \label{reformulation-update-MD-LOT-alpha}
%  (Q_{k+1},R_{k+1},g_{k+1}) \eqdef  \\
%  \argmin_{\bm{\zeta}\in\mathcal{C}_1(a,b,r,\alpha)\cap \mathcal{C}_2(r)} \text{KL}(\bm{\zeta},\bm{\xi}_k)
% \end{equation}
% with the exact same $\bm{\xi}_k$ as in~(\ref{reformulation-update-MD-LOT}). However, here the IBP algorithm fails to converge as the contraints are no more affine subspaces. In order to solve~(\ref{reformulation-update-MD-LOT-alpha}), one can use the Dykstra’s Algorithm~(\ref{alg-Dykstra}) which converge to the projection, see~\cite{bauschke2000dykstras}. Note that the time complexity of each iteration of the Dykstra algorithm~(\ref{alg-Dykstra}) is the same as the IBP algorithm~(\ref{alg-IBP}) which is linear with respect to the number of samples: $\mathcal{O}((n+m)r)$.

\begin{figure*}[!t]
\centering
\includegraphics[width=1\textwidth]{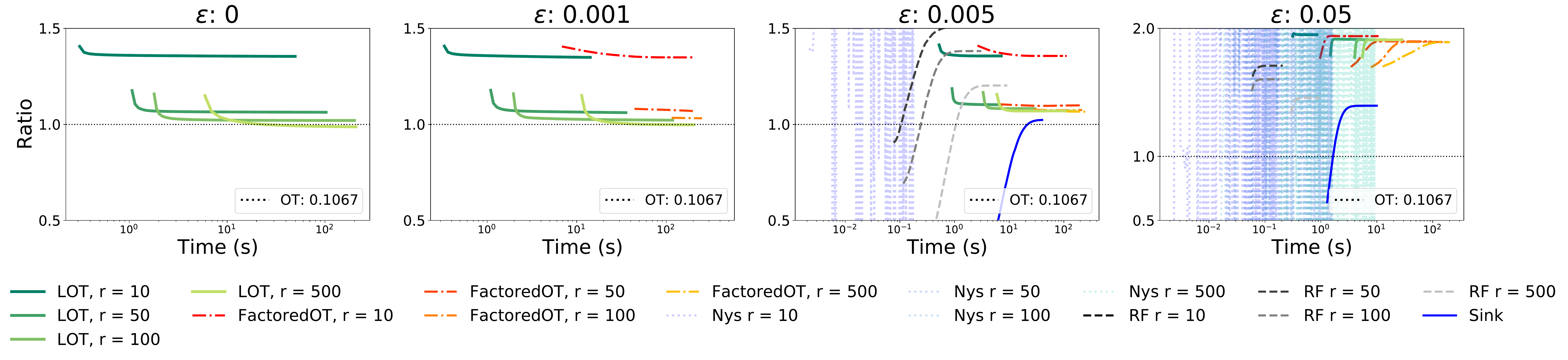}
\caption{In this experiment, we consider two Gaussian distributions evaluated on $n=m=5000$ in 2D. The first one has a mean of $(1,1)^T$ and identity covariance matrix $I_2$ while the other has 0 mean and covariance $0.1\times I_2$. The ground cost is the squared Euclidean distance. Note that for this cost, an exact low-rank factorization of the cost is available, and therefore all low-rank methods, including ours, have a linear time complexity. 
\emph{Left:} we show that when $\varepsilon=0$ our method is able to quickly obtain the exact OT by forcing the nonnegative rank of the coupling to be relatively small compared to the number of samples. Note that in this setting, all the other methods cannot be applied.  \emph{Middle left, middle right:} In these plots, we show that our method can obtain high accuracy for either estimate the true OT or its regularized version with order of magnitude faster than the other low-rank methods for any rank $r$. Moreover, our methods outperforms \textbf{Sin} in these regimes of small regularizations. Note that \textbf{Sin} does not converge for $\varepsilon=0.002$ as we do not consider its stabilized version using log-sum-exp function but rather its classical version which is less costly to compute. \emph{Right:} Here we change the scale of the $y$-axis of the plot. We see that the regime of the entropic regularizations for the Sinkhorn algorithm and our method differs. Indeed, the Sinkhorn algorithm has a larger range of $\varepsilon$ such that it provides an efficient approximation of the OT, whereas \textbf{LOT} is regularizing \textit{twice}, namely with respect to both rank \textit{and} entropy.
}\label{fig-LR-Square-Euclidean}
\vspace{-0.1cm}
\end{figure*}

\subsection{Linear time approximation of the Low-Rank Optimal Transport}
\label{sec-OT-linear}
Here we aim at obtaining the optimal solution of Eq.~(\ref{eq-LOT-reformulated-ent}) in linear time with respect to the number of samples. For that purpose let us introduce our main assumption on the cost matrix $C$.
\begin{assumption}
\label{assump-low-rank-sin}
Assume  that $C$ admits a low-rank factorization, that is there exists $A\in\mathbb{R}^{n\times d}$ and $B\in\mathbb{R}^{m\times d}$ such that $C = AB^{T}.$
\end{assumption}
From the Assumption~\ref{assump-low-rank-sin} one can in fact accelerate the computation in the iterations of the proposed Alg.~(\ref{alg-MDLROT-alpha}) and obtain a linear time algorithm with respect to the number of samples. Indeed recall that given $\bm{\xi}=(\xi^{(i)})_{1\leq i\leq 3}$, each iteration of the Dykstra's Alg.~(\ref{alg-Dykstra}) can be performed in linear time. Moreover, thanks to Assumption~\ref{assump-low-rank-sin}, the computation of $\bm{\xi}$, which requires to compute both $CR$ and $C^TQ$ can be performed in $\mathcal{O}((n+m)dr)$ algebraic operations and thus Alg.~(\ref{alg-MDLROT-alpha}) requires only a linear number of algebraic operations with respect to the number of samples at each iteration.

Let us now justify why the Assumption~\ref{assump-low-rank-sin} of a low-rank factorization for the cost matrix is well suited in the problem of computing the Optimal Transport.

\paragraph{Squared Euclidean Metric.} In the specific case where $C$ is a Square Euclidean distance matrix, it admits a low-rank decomposition. Indeed let $X \eqdef [x_1,\dots,x_n]\in\mathbb{R}^{d\times n}$, let $Y \eqdef [y_1,\dots,y_m]\in\mathbb{R}^{d\times m}$ and let $D \eqdef (\Vert x_i-y_j\Vert_2^2)_{i,j}$. Then by denoting $p=[\Vert x_1\Vert_2^2,\dots,\Vert x_n\Vert_2^2]^T\in\mathbb{R}^n$ and $q=[\Vert y_1\Vert_2^2,\dots,\Vert y_m\Vert_2^2]^T\in\mathbb{R}^m$ we can rewrite $D$ as the following:
\begin{align*}
    D=p\mathbf{1}_m^T + \mathbf{1}_nq^T - 2 X^T Y.
\end{align*}
Therefore by denoting $A=[p,\mathbf{1}_n,-2X^T]\in\mathbb{R}^{n\times (d+2)}$ and $B=[\mathbf{1}_m,q,Y^T]\in\mathbb{R}^{n\times (d+2)}$ we obtain that
\begin{align*}
    D=AB^T.
\end{align*}

\paragraph{General Case: Distance Matrix.} In the following we denote a distance matrix $D\in\mathbb{R}^{n\times m}$, any matrix such that there exists a metric space $(\mathcal{X},d)$, $\{x_i\}_{i=1}^n \in\mathcal{X}^n$ and $\{y_j\}_{j=1}^m\in \mathcal{X}^m$ which satisfy for all $i,j$, $D_{i,j}=d(x_i,y_j)$. In fact it is always possible to obtain a low-rank approximation of a distance matrix in linear time. In~\cite{bakshi2018sublinear,indyk2019sampleoptimal}, the authors proposed an algorithm such that for any distance matrix $D\in\mathbb{R}^{n\times m}$ and $\gamma>0$ it outputs matrices $M\in\mathbb{R}^{n\times d}$, $N\in\mathbb{R}^{m\times d}$ in $\mathcal{O}((m+n)\text{poly}(\frac{d}{\gamma}))$ algebraic operations such that with probability at least $0.99$ we have
\begin{align*}
    \Vert D - MN^T\Vert_F^2\leq \Vert D - D_d\Vert_F^2 +\gamma\Vert D\Vert_F^2
\end{align*}
where $D_d$ denotes the best rank-$d$ approximation to $D$. Therefore one can always obtain a low-rank factorization of a distance matrix in linear time with respect to the number of samples. See Appendix~\ref{sec-LR-Distance} for more details.
% Indeed as $D$ is obtained from a metric, $D$ is therefore symmetric. Then by the Gershgorin circle theorem, on can estimate an upper bound of the biggest singular value by
% $$\sigma_D\leq \max_{i}\left(R_i \eqdef \sum_{j}|D_{i,j}|\right).$$
% Computing the above upper bound of $\sigma_D$ requires $\mathcal{O}(n^2)$, but in fact we can obtain an upper bound in linear time by taking the advantage that it is a distance matrix. Indeed by the triangular inequality we have that for all $i\in \{1,\dots,n\}$,
% $$ R_i\leq R_1 + (n-1)\times D_{1,i} $$
% and therefore 
% $$\sigma_D\leq R_1 + (n-1) \max_{i}\left(D_{1,i}\right).$$
% Now remark that $D+\sigma_D \text{I}_n$ is a positive semi-definite matrix and applying a Nystrom method \todo{Pb: Nystrom depends on the effective dimension => Approx Musco for PSD matrix independent w.r.t the condition number: pb it does not admit a low rank factorisation as the matrix will be full rank.../ Otherwise check paper factorization distance matrix} leads to an efficient approximate low-rank factorization of $D+\sigma_D \text{I}_n$ 
% $$ D+\sigma_D\simeq \text{I}_n A_1A_1^T.$$
% Therefore we end up with an approximation $\tilde{D}$ of the matrix $D$ of the form 
% $$D\simeq \tilde{D} \eqdef A_1A_1^T - \sigma_D \text{I}_n$$
% in $\mathcal{O}(n r+ r^3)$ algebraic operations.

\section{Numerical Results}
We consider three problems in which we study the time-accuracy trade-off as well as the couplings obtained, by comparing our method with other low-rank methods, as well as Sinkhorn's algorithm. More precisely, we compare our proposed method, \textbf{LOT}, with the factored Optimal Transport~\cite{forrow2018statistical}, \textbf{FactoredOT}, the Nystrom-based method~\cite{altschuler2018massively}, \textbf{Nys},  the random features-based method~\cite{scetbon2020linear}, \textbf{RF} and the Sinkhorn algorithm~\cite{cuturi2013sinkhorn}, \textbf{Sin}. For \textbf{LOT}, and in all experiments, we set the lower bound on $g$ to $\alpha=10^{-5}$.

\paragraph{Time-accuracy Tradeoff} We consider two problems where the ground cost involved in the OT problem is either the \emph{squared Euclidean} distance or the \emph{Euclidean} distance.
In the first one, we consider measures supported on $n=5000$ points in $\mathbb{R}^2$, while the second we consider $n=10000$ samples in $\mathbb{R}^{2}$. The method proposed by~\cite{forrow2018statistical} can only be used with the squared Euclidean distance (2-Wasserstein) while ours works for any cost.  For all the low-ranks methods, we vary the ranks between 10 and 500. For all the randomized methods, we consider the mean over 10 runs to estimate the OT. 

In Fig.~\ref{fig-LR-Square-Euclidean},~\ref{fig-LR-Euclidean} we plot the ratio w.r.t. the (non-regularized) optimal transport cost defined as $\text{R}:= \langle C,\widetilde{P}\rangle / \langle C,P^{*}\rangle$ where $\widetilde{P}$ is the coupling obtained by the method considered and $P^{*}$ is the ground truth (we ensure this optimal cost is large enough to avoid spurious divisions by 0). We present the time-accuracy tradeoffs of the methods for different regularizations $\varepsilon$ and ranks $r$. We show that our method provides consistently a better approximation of the OT while being much faster than the other low-rank methods for various targeted rank values $r$. We also show that our method is able to approximate arbitrarily well the OT and so faster than the Sinkhorn algorithm thanks to the low-rank constraints. We compare the methods in the same setting but we increase the dimensionality of the problems considered and we observe similar results. See Appendix~\ref{sec-exp-add} for more details.

\begin{figure*}[!ht]
\centering
\includegraphics[width=1\textwidth]{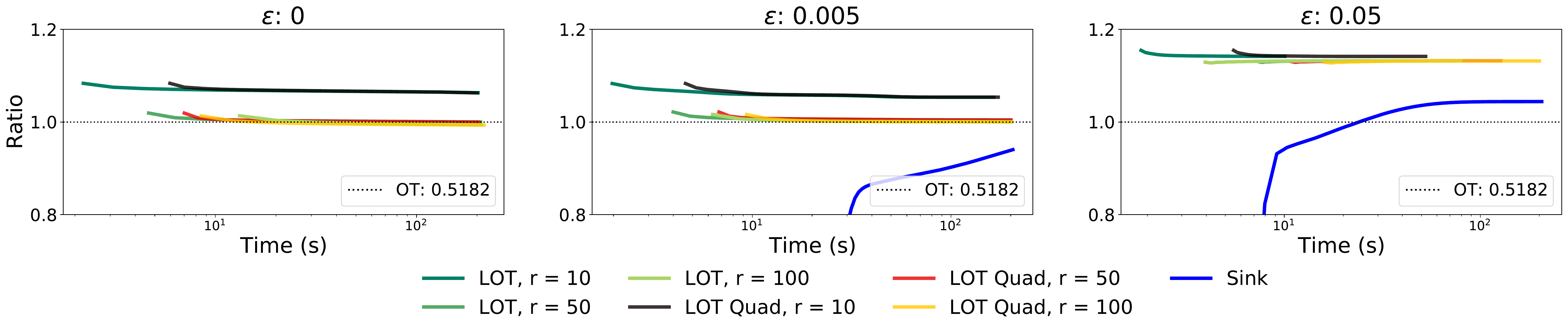}
    \caption{Here we consider two Gaussian mixture densities sampled with $n=m=10000$ points in 2D (See Appendix~\ref{sec-exp-add} for more details). The ground cost is the Euclidean distance. As this cost is a distance, we can apply our linear version of the algorithm and we denote \textbf{LOT Quad} to refer to its quadratic counterpart. We see that  \textbf{LOT} and \textbf{LOT Quad} provide similar results while \textbf{LOT} is faster.
    All kernel-based methods (\textbf{Nys}, \textbf{RF}) fail to converge in this setting. As in Fig.~\ref{fig-LR-Square-Euclidean}, we see that our method is able to approximate faster than \textbf{Sin} the true OT thanks to the low-rank constraint.
    }\label{fig-LR-Euclidean}
\vspace{-0.1cm}
\end{figure*}

\begin{figure}[!h]
\begin{center}
\includegraphics[width=0.44\textwidth,height=0.31\textwidth]{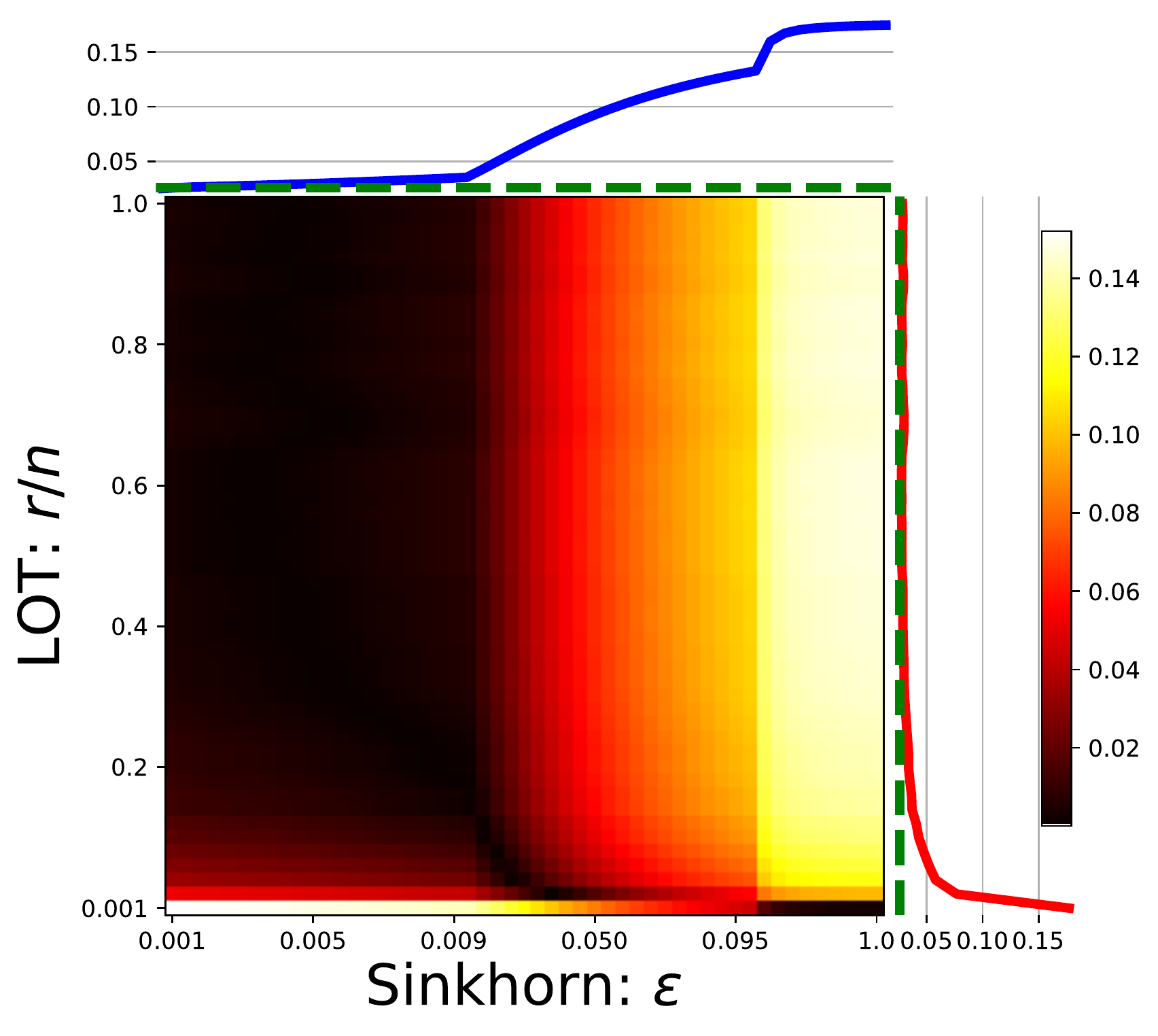}\\
% (a)\\
\includegraphics[width=0.44\textwidth,height=0.31\textwidth]{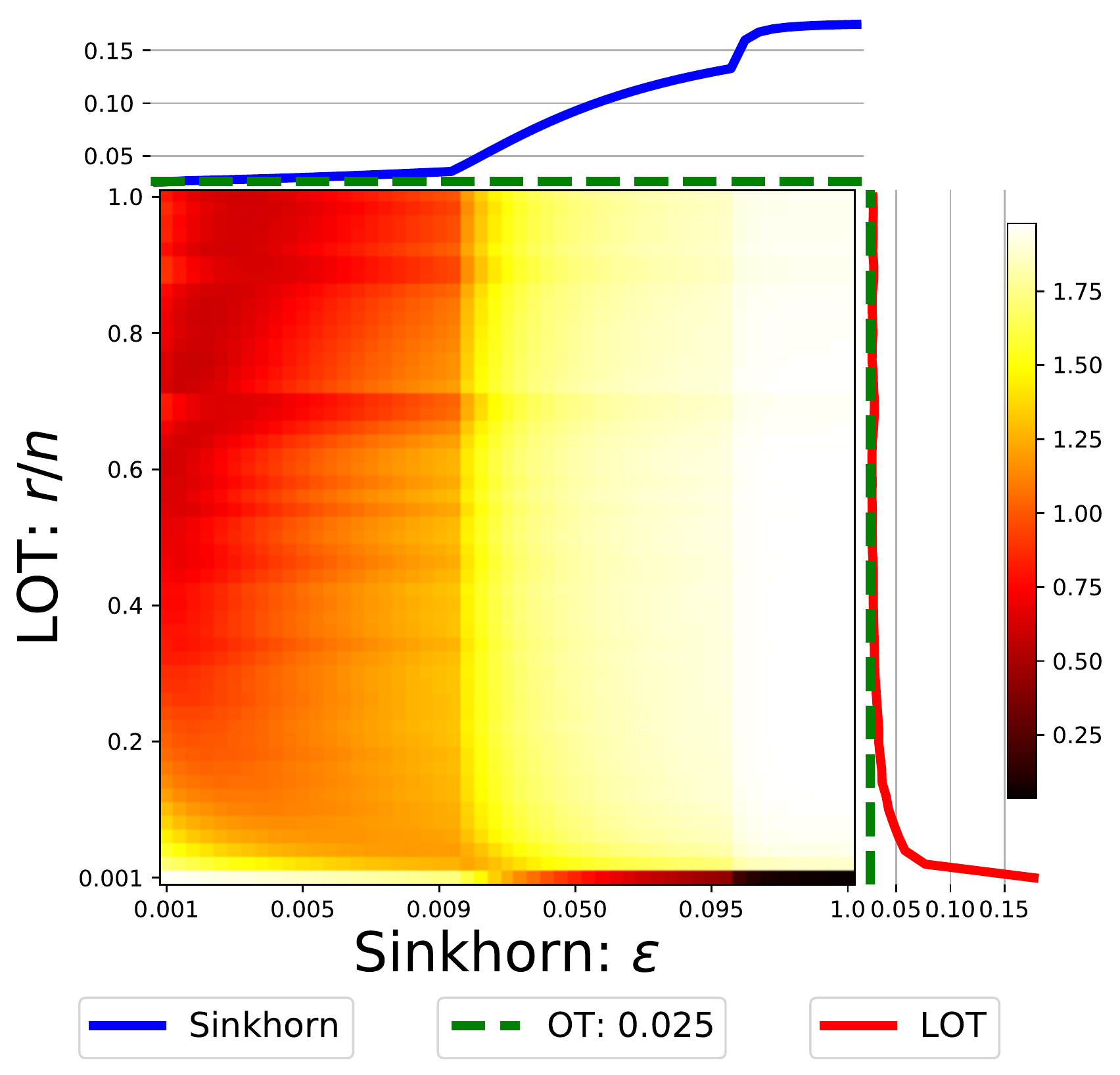} 
% (b)
\caption{We illustrate in this plot the gaps between the OT objective (top) and the couplings (bottom) reached by \textbf{Sin} and \textbf{LOT} for varying regularization strengths. Measures were sampled on a complete graph obtained by sampling $2n=2000$ points from a 2-D standard normal distribution, the edge weights set to their squared Euclidean distances. The supports are obtained by randomly splitting the nodes of the graphs into two subsets of same size. We vary the entropic regularization $\varepsilon$ and the nonnegative rank $r$. We consider $\varepsilon$ in log-scale ranging from $0.001$ to $1$ and $r$ ranging from 1 to 1000, represented as a fraction of $n$. The blue (resp. red) curve stands for \textbf{Sin} (resp. \textbf{LOT}). We plot the absolute difference between the OT obtained (top) and $\ell_1$ distance between their respective couplings (bottom).}
\label{fig:couplings}
\end{center}
\vspace{-0.1cm}
\end{figure}

\begin{rmq}
Adding an entropic regularization in our objective allows to stabilize the MD scheme and therefore obtain faster convergence. Indeed if $\varepsilon>0$, then the number of iterations required to solve each iteration of the MD scheme~(\ref{eq-barycenter-LOT-alpha}) by Algorithm~(\ref{alg-Dykstra}) is monitored by $\varepsilon$ given a certain precision $\delta$ while in the case where $\varepsilon=0$, the number of iterations required for Algorithm~\ref{alg-Dykstra} to reach the precision $\delta$ increases as the number of iterations in the MD scheme increases. 
\end{rmq}

\paragraph{Comparison of the Couplings} Seeking to take a deeper look at the phenomenon highlighted in Fig.~\ref{fig-coupling}, we study differences in the regularization paths of \textbf{LOT} and \textbf{Sin}. We consider distributions supported on graphs of $n=1000$ nodes, endowed with the shortest path distance~\cite{bondy1976graph}. We consider $\textbf{LOT}$ with \textit{no} entropic regularization (i.e. $\varepsilon=0$ in Eq.~\eqref{eq-LOT-reformulated-alpha}) against \textbf{Sin} for various pairs of regularizers. Results are displayed in Fig.~\ref{fig:couplings}, where the discrete path of regularizations parameterized by the rank $r$ of \textbf{LOT} is compared with that obtained by \textbf{Sin} when varying $\varepsilon$. The gaps in ratio $\text{R}$ and couplings (in $\ell_1$) between the two methods are displayed. Both methods are able to approximate arbitrarily well the OT but offer two different paths to interpolate from the independent coupling $ab^T$ of rank 1 to the optimal one. More precisely, we see that the range of $\varepsilon$ for which the entropic OT provides an efficient approximation of the true coupling is very localized, while the rank $r$ needed for $\textbf{LOT}$ to obtain such approximation is wider. Moreover, we see that the decay of the ratio of \textbf{LOT} with respect to $r$ is faster than the decay of \textbf{Sin} w.r.t. $\varepsilon$.

\begin{rmq}
A comparative advantage of using the low-rank parameterization of OT over the Sinkhorn approach lies in the simple bounds that $r$ admits, between $1$ and $n$, and the fact that $r$ encodes directly, through an integer, a direct property of the resulting coupling. In that sense, the same value $r$ can be used across experiments that compare measures of various sizes and supports. By contrast, selecting a suitable regularization strength $\varepsilon$ in the Sinkhorn algorithm is usually challenging, as the parameter is continuous and its magnitude depends directly on the cost matrix values, making a common choice across experiments difficult.
\end{rmq}

\paragraph{Conclusion} We proposed a new approach to regularize the OT problem by restricting solutions to have a small non-negative rank. Our algorithm leverages both low-rank constraints and entropic smoothing. Our method can leverage the factorization of the ground cost (and \textit{not} that of the kernel usually associated to Sinkhorn) to propose a linear time complexity alternative to solve OT problems.

\paragraph{Acknowledgements}

The work of G. Peyré was supported by the European Research Council (ERC project NORIA) and by the French government under management of ANR as part of the ``Investissements d’avenir'' program (ANR19-P3IA-0001, PRAIRIE 3IA Institute).

\newpage
\bibliography{biblio}
\bibliographystyle{abbrv}

\clearpage
\appendix

% \documentclass[twoside]{article}

% \usepackage{aistats2021}
% If your paper is accepted, change the options for the package
% aistats2021 as follows:
%
%\usepackage[accepted]{aistats2021}
%
% This option will print headings for the title of your paper and
% headings for the authors names, plus a copyright note at the end of
% the first column of the first page.

% If you set papersize explicitly, activate the following three lines:
%\special{papersize = 8.5in, 11in}
%\setlength{\pdfpageheight}{11in}
%\setlength{\pdfpagewidth}{8.5in}

% If you use natbib package, activate the following three lines:
%\usepackage[round]{natbib}
%\renewcommand{\bibname}{References}
%\renewcommand{\bibsection}{\subsubsection*{\bibname}}

% If you use BibTeX in apalike style, activate the following line:
%\bibliographystyle{apalike}

% \begin{document}

% If your paper is accepted and the title of your paper is very long,
% the style will print as headings an error message. Use the following
% command to supply a shorter title of your paper so that it can be
% used as headings.
%
%\runningtitle{I use this title instead because the last one was very long}

% If your paper is accepted and the number of authors is large, the
% style will print as headings an error message. Use the following
% command to supply a shorter version of the authors names so that
% they can be used as headings (for example, use only the surnames)
%
%\runningauthor{Surname 1, Surname 2, Surname 3, ...., Surname n}

% Supplementary material: To improve readability, you must use a single-column format for the supplementary material.
\onecolumn
\onecolumn
\section*{Supplementary material}
In Sec.~\ref{sec-MD}, we introduce some important notions linked to the mirror-descent scheme. We also prove in this section a general result which states the non-asymptotic stationary convergence of the mirror-descent according to a specific criterion introcuded in this work. In Sec.~\ref{sec-Dykstra}, we detail the computation of the Dykstra's algorithm~\ref{alg-Dykstra} for which we have obtained a simple expression of the updates of the couplings. In Sec.~\ref{sec-proofs}, we provides all the proofs of the Propositions introduced in this work in the main text. In Sec~\ref{sec-LR-Distance}, we detail the algorithm presented in~\cite{indyk2019sampleoptimal}. In Sec.~\ref{sec-LR-fixed}, \ref{sec-LR-IBP}, we give two variants of our algorithm when either the marginal $g$ is fixed or when no lower bound is provided on the coordinates of $g$. In Sec.~\ref{sec-exp-add}, we provides more experiment to illustrate our method.

% \color{blue}\paragraph{Disclaimer.} While we were preparing the supplemental material, we found that we have switched two plots. Fig.~\ref{fig-LR-Euclidean} in the main paper displays at this moment the time-accuracy trade-off for several methods between two mixtures of gaussians in \emph{2D} (contradicting our description in the caption which mentions $d=10$), while Fig.~\ref{fig-LR-euclidean-d-10} in the supplementary, present results of the same experiment with the dimension of the problem set to \textit{10}. We will switch back these plots to obtain our intended display.
% \color{black}

\section{Mirror Descent Algorithm}
\label{sec-MD}
Let $\mathcal{X}$ a closed convex subset in a Euclidean space $\mathbb{R}^q$, $f:\mathcal{X}\rightarrow{\mathbb{R}}$ continuously differentiable and let us consider the following problem
\begin{align}
\label{optim-problem}
    \min_{x\in\mathcal{X}}f(x).
\end{align}
Given a convex function $h:\mathcal{X}\rightarrow\mathbb{R}$ continuously differentiable, one can define the \emph{prox-function} associated to $h$ as 
\begin{align*}
    D_h(x,z):=h(x)-h(z)-\langle \nabla h(z),x-z\rangle.
\end{align*}
To solve Eq.~\eqref{optim-problem}, one can employ the mirror-descent (MD) algorithm. Given an initial point $x_0\in\mathcal{X}$ and a sequence of positive step-size $(\gamma_k)_{k\geq 0}$, the mirror-descent scheme associated to the \emph{prox-function} $D_h$ computes
\begin{align*}
    x_{k+1}=\argmin_{x\in\mathcal{X}} \langle\nabla f(x_k),x\rangle +\frac{1}{\gamma_k}D_h(x,x_k).
\end{align*}
In the following, we need to introduce two notions of relative strong convexity and relative smoothness in order to prove non-asymptotic stationary convergence of the MD scheme. 
\begin{defn*}[Relative smoothness.]
Let $L>0$ and $f$ continuously differentiable on $\mathcal{X}$. $f$ is said to be $L$-smooth relatively to $h$ if
\begin{align*}
    f(y)\leq f(x) +\langle \nabla f(x),y-x\rangle + L D_h(y,x)
\end{align*}
\end{defn*}

\begin{defn*}[Relative strong convexity]
Let $\alpha>0$ and $f$ continuously differentiable on $\mathcal{X}$. $f$ is said to be $\alpha$-strongly convex relatively to $h$ if
\begin{align*}
    f(y)\geq f(x) +\langle \nabla f(x),y-x\rangle + \alpha D_h(y,x)~\forall~x,y\in\mathcal{X}
\end{align*}
\end{defn*}
Note that $h$ is always 1-strongly convex relatively to $h$. Let us now prove a general result to show non-asymptotic stationary convergence of the MD scheme. For that purpose, we introduce for all $k\geq 0$ the following criterion to establish convergence:
\begin{align*}
   \Delta_k \eqdef \frac{1}{\gamma_k^2}(D_h(x_k,x_{k+1})+D_h(x_{k+1},x_k)).
\end{align*}
\begin{prop}
\label{prop-MD-general}
Let $N\geq 1$, $f$ continuously differentiable on $\mathcal{X}$ which is $L$-smooth relatively to $h$. By considering for all $k=1,\dots, N$, $\gamma_k=1/2L$, and by denoting  $D_0=f(x_0)-\min_{x\in\mathcal{X}}f(x)$, we have
\begin{align*}
    \min_{0\leq k\leq N-1} \Delta_k \leq \frac{4 L D_0}{N}.
\end{align*}
\end{prop}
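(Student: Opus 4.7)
The plan is to combine the first-order optimality condition of the MD subproblem with the relative-smoothness hypothesis to obtain a per-iteration descent inequality, and then telescope. First, I would write down the optimality condition at $x_{k+1}$ for the subproblem $\min_{x\in\mathcal{X}} \langle \nabla f(x_k),x\rangle + \gamma_k^{-1} D_h(x,x_k)$, namely
\[
\langle \nabla f(x_k) + \tfrac{1}{\gamma_k}(\nabla h(x_{k+1})-\nabla h(x_k)),\, x - x_{k+1}\rangle \geq 0, \qquad \forall\, x\in\mathcal{X}.
\]
Specializing to $x = x_k$ and invoking the elementary three-point identity $\langle \nabla h(x_{k+1}) - \nabla h(x_k),\, x_{k+1}-x_k\rangle = D_h(x_k,x_{k+1}) + D_h(x_{k+1},x_k)$, which follows directly by adding the two definitions of $D_h$, I would obtain the key bound
\[
\langle \nabla f(x_k),\, x_k - x_{k+1}\rangle \geq \tfrac{1}{\gamma_k}\bigl(D_h(x_k,x_{k+1}) + D_h(x_{k+1},x_k)\bigr).
\]

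Next, I would apply the $L$-relative smoothness of $f$ at $(x_k,x_{k+1})$, namely $f(x_{k+1}) \leq f(x_k) + \langle \nabla f(x_k), x_{k+1}-x_k\rangle + L D_h(x_{k+1},x_k)$. Adding the two displayed inequalities and substituting the constant stepsize $\gamma_k = 1/(2L)$, so that $\gamma_k^{-1} = 2L$, gives
\[
f(x_k) - f(x_{k+1}) \geq 2L\, D_h(x_k,x_{k+1}) + L\, D_h(x_{k+1},x_k) \geq L\bigl(D_h(x_k,x_{k+1}) + D_h(x_{k+1},x_k)\bigr).
\]
Because $\gamma_k^{-2} = 4L^2$, the right-hand side is exactly $\Delta_k/(4L)$, so I would conclude the per-iteration descent $f(x_k) - f(x_{k+1}) \geq \Delta_k/(4L)$.

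Finally, I would telescope this descent inequality over $k=0,\dots,N-1$: since $f(x_N) \geq \min_{x\in\mathcal{X}} f(x)$, the telescoping sum is at most $D_0$, giving $\sum_{k=0}^{N-1} \Delta_k \leq 4L D_0$, and the conclusion follows from the trivial bound $\min_{0\leq k\leq N-1} \Delta_k \leq \tfrac{1}{N}\sum_{k=0}^{N-1} \Delta_k$. The only step that requires care is the three-point identity at the very start: it is precisely what forces the \emph{symmetric} combination $D_h(x_k,x_{k+1}) + D_h(x_{k+1},x_k)$ to appear on the right-hand side of the optimality inequality, and this symmetric combination is exactly the quantity $\Delta_k$ we are trying to control. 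Without this identity one would only recover a one-sided Bregman term, leading to the weaker criterion used in the Ghadimi--Lan analysis rather than the stronger $\Delta_k$ considered here.
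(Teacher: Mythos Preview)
Your proof is correct and follows essentially the same route as the paper: first-order optimality at $x_{k+1}$ with $x=x_k$, the identity $\langle \nabla h(x_{k+1})-\nabla h(x_k),\,x_{k+1}-x_k\rangle = D_h(x_k,x_{k+1})+D_h(x_{k+1},x_k)$, relative $L$-smoothness, and a telescoping sum. The only cosmetic difference is that the paper keeps $\gamma_k$ generic, writes the per-step inequality as $\gamma_k(1-\gamma_k L)\Delta_k \le f(x_k)-f(x_{k+1})$ (after adding a nonnegative $L\,D_h(x_k,x_{k+1})$ term to symmetrize), and only then specializes to $\gamma_k=1/(2L)$; you substitute the stepsize one line earlier and drop the excess $L\,D_h(x_k,x_{k+1})$ instead, which is equivalent.
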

\begin{proof}
Let $k\geq 0$, then by $L$-smoothness of $f$, we have
\begin{align*}
    f(x_{k+1})\leq f(x_k) +\langle\nabla f(x_k),x_{k+1}-x_k\rangle + L D_h(x_{k+1},x_k),
\end{align*}
and by optimality of $x_{k+1}$, we have for all $x\in\mathcal{X}$,
\begin{align*}
    \langle \nabla f(x_k)+\frac{1}{\gamma_k}[\nabla h(x_{k+1})-\nabla h(x_k)],x-x_{k+1}\rangle \geq 0,
\end{align*}
which implies, by taking $x=x_k$, that
\begin{align*}
     \langle \nabla f(x_k),x_k-x_{k+1}\rangle& \geq\frac{1}{\gamma_k}[-\langle\nabla h(x_{k+1}),x_k-x_{k+1}\rangle - \langle\nabla h(x_{k}),x_{k+1}-x_k\rangle]\\
     &\geq\frac{1}{\gamma_k}[D_h(x_k,x_{k+1})+D_h(x_{k+1},x_{k})].
\end{align*}
Then we have
\begin{align*}
    f(x_{k+1})\leq f(x_k)-\frac{1}{\gamma_k}[D_h(x_k,x_{k+1})+D_h(x_{k+1},x_{k})] + L D_h(x_{k+1},x_k)+L D_h(x_k,x_{k+1})
\end{align*}
where the last term is added by positivity of $D_h(\cdot,\cdot)$ (as $h$ is supposed to be convex on $\mathcal{X}$). Finally we obtain that
\begin{align*}
    \left(\sum_{k=0}^{N-1}\gamma_k(1-\gamma_k L)\Delta_k\right)\leq f(x_0)-f(x_N)\leq D_0,
\end{align*}
and as soon as $\gamma_k<\frac{1}{L}$, we have
\begin{align*}
    \min_{0\leq k\leq N-1} \Delta_k\leq \frac{D_0}{ \left(\sum_{k=0}^{N-1}\gamma_k(1-\gamma_k L)\right)}.
\end{align*}
Then by taking $\gamma_k=\frac{1}{2L}$, the result follows.
\end{proof}
In this paper, we consider $h$ to be the negative entropy function defined on $\Delta_q^{*}$ as
\begin{align}
    h(x)=\sum_{i=1}^q x_i\log(x_i).
\end{align}
Therefore the \emph{prox-function} associated is just the Kullback–Leibler divergence (KL) defined as,
\begin{align*}
      \text{KL}(x,z)=\sum_{i=1}^q x_i\log(x_i/z_i).
\end{align*}
Moreover if $\mathcal{X}\subset\prod_{i=1}^{p}\Delta_{q_i}^{*}$ for $p\geq 1$, we consider instead
\begin{align*}
    h((x^{(1)},\dots,x^{(p)})):=\sum_{i=1}^p\sum_{j=1}^{q_i}x_j^{(i)}\log(x_j^{(i)})
\end{align*}
where the associated \emph{prox-function} is 
\begin{align*}
    D_h((x^{(1)},\dots,x^{(p)}),(z^{(1)},\dots,z^{(p)}))=\sum_{i=1}^p \text{KL}(x^{(i)},z^{(i)}).
\end{align*}
\section{The Dykstra's Algorithm}
\label{sec-Dykstra}
In order to solve Eq.~(\ref{eq-barycenter-LOT-alpha}), we use the Dykstra's Algorithm~\cite{dykstra1983algorithm}. Given a closed convex set $\mathcal{C}\subset \mathbb{R}_{+}^{n\times r}\times \mathbb{R}_{+}^{m\times r} \times \mathbb{R}_{+}^{r}$, we denote for all $\bm{\xi}\in (\mathbb{R}_{+}^{*})^{n\times r}\times (\mathbb{R}_{+}^{*})^{m\times r} \times (\mathbb{R}_{+}^{*})^{r}$ the projection according to the Kullback-Leibler divergence as
\begin{align*}
    \mathcal{P}_{\mathcal{C}}^{\text{KL}}(\bm{\xi}) \eqdef \argmin_{\bm{\zeta}\in\mathcal{C}}\text{KL}(\bm{\zeta},\bm{\xi}).
\end{align*}
Starting from $\bm{\zeta}_0 \eqdef \bm{\xi}$ and $\bm{q}_{0}=\bm{q}_{-1}=(\mathbf{1},\mathbf{1},\mathbf{1})\in \mathbb{R}_{+}^{n\times r}\times \mathbb{R}_{+}^{m\times r} \times \mathbb{R}_{+}^{r}$, the Dykstra's Algorithm~\ref{alg-Dykstra} applied to our problem consists in computing for all $j\geq 0$, 
\begin{align*}
    \bm{\zeta}_{2j+1} &= \mathcal{P}_{\mathcal{C}_1(a,b,r,\alpha)}^{\text{KL}}(\bm{\zeta}_{2j}\odot \bm{q}_{2j-1})\\
    \bm{q}_{2j+1}& =  \bm{q}_{2j-1}\odot \frac{\bm{\zeta}_{2j}}{\bm{\zeta}_{2j+1}} \\
    \bm{\zeta}_{2j+2} &= \mathcal{P}_{\mathcal{C}_2(r)}^{\text{KL}}(\bm{\zeta}_{2j+1}\odot \bm{q}_{2j})\\
    \bm{q}_{2j+2}& =  \bm{q}_{2j} \odot \frac{\bm{\zeta}_{2j+1}}{\bm{\zeta}_{2j+2}}.
\end{align*}

In fact these operations can be simplified to simple matrix/vector multiplications. More precisely, the Dykstra's Algorithm produces the iterates $(\bm{\zeta}_j)_{j\geq 0}$ which satisfy for all $j\geq 0$ $\bm{\zeta}_j=(Q_j,R_j,g_j)$ where
\begin{align*}
    Q_j&= \Diag(u_j^{1})\xi^{(1)} \Diag(v_j^{1})\\
    R_j&= \Diag(u_j^{2})\xi^{(2)} \Diag(v_j^{2})
\end{align*}
for the sequences $(u_j^{i},v_j^{i})_{j\geq 0}$ initialized as, $u_0^{i} \eqdef \mathbf{1}_n$, $v_0^{i} \eqdef \mathbf{1}_m$ for all $i\in\{1,2\}$, $q_{0,1}^{(3)}=q_{0,2}^{(3)}= q_{0}^{(1)}=q_{0}^{(2)}=\mathbf{1}_r $ and computed with the iterations
\begin{align*}
    u_{n+1}^{k,i}&=\frac{p_i}{\xi_k^{i}v_n^{k,i}}\\
    \tilde{g}_{n+1} & = \max(\alpha,g_{n}\odot q_{n,1}^{(3)}),~
    q_{n+1,1}^{(3)} = (g_{n}\odot q_{n,1}^{(3)}) / \tilde{g}_{n+1}\\
    g_{n+1}&=(\tilde{g}_{n+1}\odot q_{n,2}^{(3)}  )^{1/3} \prod_{i=1}^2 (v_n^{k,i}\odot q_{n}^{(i)} \odot (\xi_{k}^{i})^Tu_n^{k,i})^{1/3}\\
     v_{n+1}^{k,i}&=\frac{g_{n+1}}{(\xi_k^{i})^T u_n^{k,i}}\\
     q_{n+1}^{(i)} & = (v_{n}^{k,i} \odot q_{n}^{(i)})/v_{n+1}^{k,i},~q_{n+1,2}^{(3)} = (\tilde{g}_{n+1}\odot q_{n,2}^{(3)})/g_{n+1}
\end{align*}

% \begin{prop}
% Let $\varepsilon\geq 0$ and $N\geq 1$. By denoting 
% $L_\varepsilon:=\sqrt{2(\Vert C\Vert_2^2\Vert\text{Diag}(1/g)\Vert_2^2+\varepsilon^2)}$ and by considering a constant stepsize in the MD scheme such that for all $k=1,\dots,N$ $\gamma_k=\frac{1}{L_\varepsilon}$, we obtain that
% \begin{align*}
%     \min_{1\leq k\leq N}\Delta_\varepsilon((Q_k,R_k),\gamma_k)\leq \frac{2L_\varepsilon  D_0}{N}.
% \end{align*}
% where $D_0:= f_{\varepsilon}(Q_0,R_0) - \text{LOT}_{r,g,\varepsilon}$ is the distance of the initial value to the optimal one. 
% \end{prop}

\section{Proofs}
\label{sec-proofs}
\subsection{Proof of Proposition~\ref{prop:existence-min}}
\begin{proof}
The case when $\varepsilon=0$ is clear. Assume now that $\varepsilon>0$.
When $r=1$, note that $\mathcal{C}_1(a,b,r)\cap\mathcal{C}_2(r)$ is closed as $g=1$ and bounded, therefore and by continuity of the objective the mininum exists. Let $r\geq 2$. First remarks that we always have $\text{LOT}_{r,\varepsilon}(\mu,\nu)\leq \text{LOT}_{r-1,\varepsilon}(\mu,\nu)$. Let us assume that (\ref{eq-LOT-reformulated-ent}) does not admits a minimum. Because the objective $F_{\varepsilon}$ is a lower semi-continuous function on $\overline{\mathcal{C}_1(a,b,r)}\cap\mathcal{C}_2(r)$, and by compacity of $\overline{\mathcal{C}_1(a,b,r)}\cap\mathcal{C}_2(r)$, the objective function admits a minimum $(Q,R,g)\in\overline{\mathcal{C}_1(a,b,r)}\cap\mathcal{C}_2(r)$ and we have $\text{LOT}_{r,\varepsilon}(\mu,\nu)=F_{\varepsilon}(Q,R,g)$. 
But as the minimum is not attained on $\mathcal{C}_1(a,b,r)\cap\mathcal{C}_2(r)$, it means that there exists at least one coordinate $i\in\{1,\dots,r\}$ such that $g_i=0$. Then because the constraints, $Q$ and $R$ both admit a column which is the null vector. By deleting these coordinates in $Q,R,g$, we obtain that  $\text{LOT}_{r,\varepsilon}(\mu,\nu)=\text{LOT}_{r-1,\varepsilon}(\mu,\nu)$. 
\end{proof}

\subsection{Proof of Proposition~\ref{prop:proj-C1}}

\begin{proof}
The first oder conditions of the projection gives that there exists $(\lambda_1,\lambda_2,\lambda_3)\in\mathbb{R}^{n}\times\mathbb{R}^{m}\times\mathbb{R}^r_{+}$ such that
\begin{align*}
    \log(Q/\tilde{Q}) +\lambda_1\mathbf{1}^T&=0\\
    \log(R/\tilde{R}) +\lambda_2\mathbf{1}^T&=0\\
    \log(g/\tilde{g}) +\lambda_3 &=0
\end{align*}
Moreover the conditions  $Q\mathbf{1}=a$, $R\mathbf{1}=b$ and $g\geq \alpha$ imply that
\begin{align*}
 Q&=\text{Diag}(a/\tilde{Q}\mathbf{1})\tilde{Q}\\
R &= \text{Diag}(b/\tilde{R}\mathbf{1})\tilde{R}\\
g & = \max(\alpha,\tilde{g}).
\end{align*}
\end{proof}

\subsection{Proof of Proposition~\ref{prop:proj-C2}}
\begin{proof}
The first order conditions of the projection states that there exists $(\lambda_1,\lambda_2)\in\mathbb{R}^r\times\mathbb{R}^r$ such that
\begin{align*}
    \log(Q/\tilde{Q}) + \mathbf{1}_n\lambda_1^T &= 0\\
    \log(R/\tilde{R}) + \mathbf{1}_m\lambda_2^T &= 0\\
    \log(g/\tilde{g}) - (\lambda_1 +\lambda_2) &= 0\\
\end{align*}
Moreover the conditions  $Q^T\mathbf{1}_n=R^T\mathbf{1}_m=g$ imply that
\begin{align*}
    Q&=\tilde{Q}\text{Diag}(g/\tilde{Q}^T\mathbf{1}_n)\\
R &= \tilde{R}\text{Diag}(g/\tilde{R}^T\mathbf{1}_m)\\
g^3 & =  \tilde{g}\odot \tilde{Q}^T\mathbf{1}_n \odot \tilde{R}^T\mathbf{1}_m
\end{align*}
from which the result follows.
\end{proof}

\subsection{Proof of Proposition~\ref{prop:cvg-MD-Dykstra}}

\begin{proof}
To show the result, we just need to show that 
$$F_{\varepsilon}:(Q,R,g)\in\mathcal{C}(a,b,r,\alpha)\rightarrow \langle C,Q\Diag (1/g) R^T\rangle -\varepsilon H(Q,R,g)$$ 
is smooth relatively to 
$$H(Q,R,g):=\sum_{i,j}Q_{i,j}\log(Q_{i,j})+\sum_{i,j}R_{i,j}\log(R_{i,j})+\sum_{j}g_{j}\log(g_{j}),$$
then by applying Proposition~\ref{prop-MD-general}, the result will follow. Let us now show that $F_{\varepsilon}$ is $L_{\varepsilon,\alpha}$-smooth. To do so, it is enough to show that~\cite{lu2017relativelysmooth,zhang2020wasserstein}
\begin{align*}
    \Vert \nabla F_\varepsilon(Q_1,R_1,g_1)-\nabla F_\varepsilon(Q_2,R_2,g_2)\Vert_2\leq L_{\varepsilon,\alpha}\Vert H(Q_1,R_1,g_1)- H(Q_2,R_2,g_2)\Vert_2 .
\end{align*}
We first have that
\begin{align*}
    \nabla F_\varepsilon(Q,R,g)=\left(CR\Diag(1/g)+\varepsilon(\log Q +\mathbf{1}),C^TQ\Diag(1/g)+\varepsilon(\log R +\mathbf{1}),
    -\mathcal{D}(Q^TRC)/g^2+\varepsilon(\log g+ 1)\right)
\end{align*}
Now we have,
\begin{align*}
 \Vert \nabla F_\varepsilon(Q_1)-\nabla F_\varepsilon(Q_2)\Vert_2^2&\leq\Vert CR_1\Diag (1/g_1)-CR_2\Diag (1/g_2)\Vert_2^2 +\varepsilon^2\Vert \log Q_1 - \log Q_2\Vert_2^2\\
 &+2\varepsilon\Vert \log Q_1-\log Q_2\Vert_2\Vert CR_1\Diag (1/g_1)-CR_2\Diag (1/g_2)\Vert_2\\
 &\leq \Vert C\Vert_2^2 \Vert (R_1 - R_2)\Diag (1/g_1)+(\Diag (1/g_1)-\Diag(1/g_2))R_2\Vert_2^2+\varepsilon^2\Vert \log Q_1 - \log Q_2\Vert_2^2\\
 &+2\varepsilon \Vert \log Q_1-\log Q_2\Vert_2\Vert CR_1\Diag (1/g_1)-CR_2\Diag (1/g_2)\Vert_2\\
 &\leq \Vert C\Vert_2^2\left[\frac{\Vert R_1-R_2\Vert_2^2}{\alpha^2}+\Vert 1/g_1-1/g_2\Vert_2^2+\frac{\Vert R_1-R_2\Vert \Vert 1/g_1-1/g_2\Vert_2}{\alpha} \right]+\varepsilon^2\Vert \log Q_1 - \log Q_2\Vert_2^2\\
 &+ 2\varepsilon \Vert \log Q_1-\log Q_2\Vert_2\Vert CR_1\Diag (1/g_1)-CR_2\Diag (1/g_2)\Vert_2.
\end{align*}
As $Q\rightarrow H(Q)$ is 1-strongly convex w.r.t to the $\ell_2$-norm on $\Delta_{n\times r}$, we have
\begin{align*}
    \Vert Q_1-Q_2\Vert_2^2&\leq \langle \log Q_1-\log Q_2,Q_1-Q_2\rangle \\
    &\leq \Vert  \log Q_1-\log Q_2\Vert_2\Vert Q_1-Q_2\Vert_2
\end{align*}
from which follows that
\begin{align*}
        \Vert Q_1-Q_2\Vert_2\leq \log Q_1-\log Q_2\Vert_2.
\end{align*}
Moreover we have
\begin{align*}
    \Vert 1/g_1 - 1/g_2\Vert_2\leq \frac{\Vert g_1-g_2\Vert_2}{\alpha^2}\leq \Vert \frac{\Vert \log g_1-\log g_2\Vert_2}{\alpha^2}
\end{align*}
Therefore we obtain that
\begin{align*}
\Vert \nabla F_\varepsilon(Q_1)-\nabla F_\varepsilon(Q_2)\Vert_2^2&\leq \left(\frac{\Vert C\Vert_2}{\alpha}\Vert \log R_1 - \log R_2\Vert_2 +\frac{\Vert C\Vert_2}{\alpha^2} \Vert\log g_1 - \log g_2\Vert_2 +\varepsilon\Vert \log Q_1 - \log Q_2\Vert_2\right)^2.
\end{align*}
An analogue proof leads to 
\begin{align*}
\Vert \nabla F_\varepsilon(R_1)-\nabla F_\varepsilon(R_2)\Vert_2^2&\leq \left(\frac{\Vert C\Vert_2}{\alpha}\Vert \log Q_1 - \log Q_2\Vert_2 +\frac{\Vert C\Vert_2}{\alpha^2} \Vert\log g_1 - \log g_2\Vert_2 +\varepsilon\Vert \log R_1 - \log R_2\Vert_2\right)^2.    
\end{align*}
Let us now consider smoothness of $F_\varepsilon$ w.r.t $g$,
\begin{align*}
\Vert \nabla F_\varepsilon(g_1)-\nabla F_\varepsilon(g_2)\Vert_2^2 &\leq \left\Vert \frac{\mathcal{D}(Q_1^TCR_1)}{g_1^2}-\frac{\mathcal{D}(Q_2^TCR_2)}{g_2^2}\right\Vert_2^2+\varepsilon^2\Vert \log g_1 - \log g_2\Vert_2^2\\ 
&+2\varepsilon \left\Vert \frac{\mathcal{D}(Q_1^TCR_1)}{g_1^2}-\frac{\mathcal{D}(Q_2^TCR_2)}{g_2^2}\right\Vert_2 \Vert \log g_1 - \log g_2\Vert_2.
\end{align*}
but we have that
\begin{align*}
 \left\Vert \frac{\mathcal{D}(Q_1^TCR_1)}{g_1^2}-\frac{\mathcal{D}(Q_2^TCR_2)}{g_2^2}\right\Vert_2^2 &\leq \Vert (1/g_1^2 - 1/g_2^2)\Diag (Q_1^TCR_1)\Vert_2^2 +\Vert \mathcal{D}(Q_1^TCR_1)-\mathcal{D}(Q_2^TCR_2)/g_2^2\Vert_2^2 \\
 &+2 \Vert (1/g_1^2 - 1/g_2^2)\Diag (Q_1^TCR_1)\Vert_2\Vert \mathcal{D}(Q_1^TCR_1)-\mathcal{D}(Q_2^TCR_2)/g_2^2\Vert_2\\
 &\leq \left(\frac{1\Vert C\Vert_2}{\alpha^2}\Vert \log g_1 - \log g_2\Vert_2+\frac{\Vert C\Vert_2}{\alpha^2}\left[ \Vert Q_1-Q_2\Vert_2^2+\Vert R_1-R_2\Vert_2\right]\right)^2.
\end{align*}
Therefore we obtain that 
\begin{align*}
   \Vert \nabla F_\varepsilon(g_1)-\nabla F_\varepsilon(g_2)\Vert_2^2\leq \left(\left(\frac{\varepsilon+2\Vert C\Vert_2}{\alpha^3}\right)\Vert \log g_1 - \log g_2\Vert_2 +\frac{\Vert C\Vert}{\alpha^2}\Vert Q_1 - Q_2\Vert_2 + +\frac{\Vert C\Vert}{\alpha^2}\Vert R_1 - R_2\Vert_2
   \right)^2
\end{align*}
Finally we obtain that
\begin{align*}
\Vert \nabla F_\varepsilon(Q_1,R_1,g_1)- \nabla F_\varepsilon(Q_2,R_2,g_2)\Vert_2^2
&\leq 3\left(\frac{\Vert C\Vert_2^2}{\alpha^2}+ \frac{\Vert C\Vert_2^2}{\alpha^4}+\varepsilon^2\right)[\Vert \log Q_1-\log Q_2\Vert_2^2+\Vert \log R_1-\log R_2\Vert_2^2] \\
&+ 3\left(\frac{2\Vert C\Vert_2^2}{\alpha^4}
+ \left(\frac{\varepsilon+2\Vert C\Vert_2}{\alpha^3}\right)^2 \right)\Vert \log g_1 -\log g_2\Vert_2^2
\end{align*}
Thus we obtain that
\begin{align*}
  \Vert \nabla F_\varepsilon(Q_1,R_1,g_1)- \nabla F_\varepsilon(Q_2,R_2,g_2)\Vert_2\leq L_{\varepsilon,\alpha}  \Vert \nabla H(Q_1,R_1,g_1)- \nabla H(Q_2,R_2,g_2)\Vert_2
\end{align*}
and the result follows.
\end{proof}

\section{Low-Rank Factorization of Distance Matrix}
\label{sec-LR-Distance}
In this section we present the algorithm used to perform a low-rank approximation of a distance matrix~\cite{bakshi2018sublinear,indyk2019sampleoptimal}. Given a metric space $(\mathcal{X},d)$, $X=\{x_i\}_{i=1}^n \in\mathcal{X}^n$ and $Y=\{y_j\}_{j=1}^m\in \mathcal{X}^m$ we aim at obtaining a low-rank approximation of the distance matrix $D=(d(x_i,y_j))_{i,j}$ with a precision $\gamma>0$. Let us now present the algorithm considered where we have denoted $t=\lfloor r/\gamma\rfloor$.
\begin{algorithm}[H]
\SetAlgoLined
\textbf{Inputs:} $X,Y,r,\gamma$\\
Choose $i^{*}\in\{1,\dots,n\}$, and $j^*\{1,\dots,m\}$ uniformly at random.\\
For $i=1,\dots,n$,~$p_i\gets d(x_i,y_j^*)^2 +  d(x_i^*,y_j^*)^2+\frac{1}{m}\sum_{j=1}^m d(x_i^*,y_j)^2$.\\
Independently choose $i^{(1)},\dots,i^{(t)}$ according $(p_1,\dots,p_n)$.\\
$X^{(t)}\gets [x_{i^{(1)}},\dots,x_{i^{(t)}}],~P^{(t)}\gets [\sqrt{t p_{i^{(1)}}},\dots,\sqrt{t p_{i^{(t)}}}],~S\gets d(X^{(t)},Y)/P^{(t)}$\\
Denote $S=[S^{(1)},\dots,S^{(m)}]$,\\
For $j=1,\dots,m$,~$q_j\gets \Vert S^{(j)} \Vert_2^2/\Vert S\Vert_F^2$\\
Independently choose $j^{(1)},\dots,j^{(t)}$ according $(q_1,\dots,q_m)$.\\
$S^{(t)}\gets  [S^{j^{(1)}},\dots,S^{j^{(t)}}],~Q^{(t)}\gets [\sqrt{t q_{j^{(1)}}},\dots,\sqrt{t q_{j^{(t)}}}],~W\gets S^{(t)}/Q^{(t)}$\\
$U_1,D_1,V_1 \gets \text{SVD}(W)$ (decreasing order of singular values).\\
$N\gets [U_1{(1)},\dots,U_1^{(r)}],~N\gets S^T N/\Vert W^T N\Vert_F$\\
Choose $j^{(1)},\dots,j^{(t)}$ uniformly at random in $\{1,\dots,m\}$.\\
$Y^{(t)}\gets [y_{j^{(1)}},\dots,y_{j^{(t)}}], D^{(t)}\gets d(X,Y^{(t)})/\sqrt{t}$.\\
$U_2,D_2,V_2 = \text{SVD}(N^T N),~U_2\gets U_2/D_2,~N^{(t)} \gets [(N^T)^{(j^{(1)})},\dots,(N^T)^{(j^{(t)})}] ,~B\gets U_2^T N^{(t)}/\sqrt{t},~A\gets (BB^T)^{-1}$.\\
$Z\gets AB(D^{(t)})^T,~M\gets Z^{T}U_2^T$\\
\textbf{Result:} $M,N$
\caption{$\text{LR-Distance}(X,Y,r,\gamma)$ \label{alg-LR-distance}}
\end{algorithm}

\section{Positive low-rank factorization with fixed marginal}
\label{sec-LR-fixed}
Let $g\in\Delta_r^{*}$, and let us for now consider the following problem
\begin{align}
\label{eq-LOT-fixed}
    \text{LOT}_{r,g}(\mu,\nu):=\min_{P\in\Pi_{a,g,b}}\langle C,P\rangle.
\end{align}
By definition of $\Pi_{a,g,b}$, this problem can be formulated as follows:
\begin{align}
\label{eq-LOT-fixed-refomulated}
    \text{LOT}_{r,g}(\mu,\nu)=\min_{\substack{Q\in\Pi_{a,g}\\ R\in\Pi_{b,g}}}\langle C,Q\text{Diag}(1/g)R^T\rangle.
\end{align}
As in the classical OT problem, one can extend the above objective and consider for any $\varepsilon\geq 0$ an entropic version of the problem defined as
\begin{equation}
\begin{aligned}
\label{eq-LOT-fixed-refomulated-reg}
    \text{LOT}_{r,g,\varepsilon}(\mu,\nu):=\min_{\substack{Q\in\Pi_{a,g}\\ R\in\Pi_{b,g}}}\langle C,Q\text{Diag}(1/g)R^T\rangle
     - \varepsilon H((Q,R))
\end{aligned}
\end{equation}
Note that for any $\varepsilon\geq 0$, the minimum always exists as the objective is continuous and $\Pi_{a,g,b}$ is compact. Moreover we clearly have that $\text{LOT}_{r,g,0}(\mu,\nu)= \text{LOT}_{r,g}(\mu,\nu)$. Applying a MD method to the objective~(\ref{eq-LOT-fixed-refomulated}) leads for all $k\geq 0$ to the following updates
\begin{align*}
    Q_{k+1}&:=\argmin_{Q\in\Pi_{a,g}}\langle C_k^{(1)},Q\rangle -\frac{1}{\gamma_k}H(Q)\\
        R_{k+1}&:=\argmin_{R\in\Pi_{a,g}}\langle C_k^{(2)} R\rangle -\frac{1}{\gamma_k}H(R)
\end{align*}
where, $(Q_0,R_0)\in\Pi_{a,g}\times\Pi_{b,g}$ is an initial point, $C_k^{(1)}:=CR_k\text{Diag}(1/g) + (\varepsilon-\frac{1}{\gamma_k})\log(Q_k)$,  $C_k^{(2)}:=C^TQ_k\text{Diag}(1/g)+ (\varepsilon-\frac{1}{\gamma_k})\log(R_k)$ and $\gamma_k$ is a sequence of positive real numbers. Therefore a MD method bowls down to solve at each iteration two regularized OT problems which can be done efficiently using the Sinkhorn algorithm~(\ref{alg-sink}).

\paragraph{Convergence of the Mirror Descent.} Even if the objective~(\ref{eq-LOT-fixed-refomulated}) is not convex in $(Q,R)$, one can obtain the non-asymptotic stationary convergence of the MD algorithm in this setting. 

% For that purpose we introduce a stronger convergence criterion than the one presented in~\cite{ghadimi2013minibatch} to obtain non-asymptotic stationary convergence of the MD scheme. Indeed 

Let $f_{\varepsilon}$ be the objective function of the problem~(\ref{eq-LOT-fixed-refomulated-reg}) defined on $X:=\Pi_{a,g}\times\Pi_{b,g}$ and let us denotes for any $\gamma>0$ and $x\in X$ 
\begin{align*}
    \mathcal{G}_\varepsilon(x,\gamma):=\argmin_{u\in X}\{ \langle \nabla f_{\varepsilon}(x),u \rangle +\frac{1}{\gamma} KL(u,x) \}.
\end{align*}
% Then the criteron used in~\cite{ghadimi2013minibatch} to show the stationary convergence is defined as the square norm of the following vector:
% \begin{align*}
%   P_{X,\varepsilon}(x,\gamma):=\frac{1}{\gamma}(x-\mathcal{G}_\varepsilon(x,\gamma)).
% \end{align*}
% This vector can be seen as a generalized projected gradient of $f_\varepsilon$ at $x$. Indeed if $X = \mathbb{R}^d$ and by replacing the \emph{prox-function} $\text{KL}(u,x)$ by $\frac{1}{2}\Vert u-x\Vert_2^2$, we would have $P_{X}(x,\gamma)=\nabla f_\varepsilon(x)$.
Let us now define the following criterion to establish convergence:
\begin{align*}
   \Delta_\varepsilon(x,\gamma):=\frac{1}{\gamma^2}(KL(x,\mathcal{G}_\varepsilon(x,\gamma))+KL(\mathcal{G}_\varepsilon(x,\gamma),x)).
\end{align*}
% Such criterion is in fact stronger than the one used in~\cite{ghadimi2013minibatch} as we have
% \begin{align*}
%     \Delta_\varepsilon(x,\gamma)&=\frac{1}{\gamma^2}(\langle  \nabla h(\mathcal{G}_\varepsilon(x,\gamma)) - \nabla h(x),  \mathcal{G}_\varepsilon(x,\gamma) - x\rangle\\
%     &\geq \frac{1}{2\gamma^2}  \Vert \mathcal{G}_\varepsilon(x,\gamma) - x\Vert_1^2=\frac{1}{2}\Vert  P_{X,\varepsilon}(x,\gamma)\Vert_1^2
% \end{align*}
% where $h$ denotes the minus entropy function and the last inequality comes from the strong convexity of $h$ on $X$.  

To show the non-asymptotic stationary convergence, we show that for any $\varepsilon\geq 0$, the objective is smooth relative to the entropy function~\cite{bauschke2017descent} and we extend the proof of~\cite{ghadimi2013minibatch} to this case. 
% Moreover, when $\varepsilon=0$, we obtain a sharper constant of smoothness showing the rates of MD algorithm to a stationary point, in terms of the generalized projected gradient~\cite{ghadimi2013minibatch}.
\begin{prop*}
Let $\varepsilon\geq 0$ and $N\geq 1$. By denoting 
$L_\varepsilon:=\sqrt{2(\Vert C\Vert_2^2\Vert\text{Diag}(1/g)\Vert_2^2+\varepsilon^2)}$ and by considering a constant stepsize in the MD scheme such that for all $k=1,\dots,N$ $\gamma_k=\frac{1}{L_\varepsilon}$, we obtain that
\begin{align*}
    \min_{1\leq k\leq N}\Delta_\varepsilon((Q_k,R_k),\gamma_k)\leq \frac{2L_\varepsilon  D_0}{N}.
\end{align*}
where $D_0:= f_{\varepsilon}(Q_0,R_0) - \text{LOT}_{r,g,\varepsilon}$ is the distance of the initial value to the optimal one. 
\end{prop*}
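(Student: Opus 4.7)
The strategy mirrors that of Proposition~\ref{prop:cvg-MD-Dykstra} in the simpler two–block setting: first establish that $f_\varepsilon$ is smooth relative to the negative–entropy prox–function $h(Q,R) \eqdef \sum_{i,j} Q_{ij}\log Q_{ij} + \sum_{i,j} R_{ij}\log R_{ij}$ on $X = \Pi_{a,g}\times\Pi_{b,g}$, then invoke the general convergence bound of Proposition~\ref{prop-MD-general}. The associated prox–function is $D_h((Q_1,R_1),(Q_2,R_2)) = \text{KL}(Q_1,Q_2) + \text{KL}(R_1,R_2)$, matching the Bregman divergence driving the MD scheme written in the statement.

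\textbf{Step 1: relative smoothness.} The gradients are
\begin{align*}
\nabla_Q f_\varepsilon(Q,R) &= C R \Diag(1/g) + \varepsilon(\log Q + \mathbf{1}), \\
\nabla_R f_\varepsilon(Q,R) &= C^T Q \Diag(1/g) + \varepsilon(\log R + \mathbf{1}).
\end{align*}
Following the characterisation of relative smoothness used in the proof of Proposition~\ref{prop:cvg-MD-Dykstra}, it is enough to show that
$\|\nabla f_\varepsilon(Q_1,R_1) - \nabla f_\varepsilon(Q_2,R_2)\|_2 \leq L_\varepsilon \|\nabla h(Q_1,R_1) - \nabla h(Q_2,R_2)\|_2$.
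Bounding the $Q$–block by the triangle inequality and operator–norm submultiplicativity,
\begin{align*}
\|\nabla_Q f_\varepsilon(Q_1,R_1) - \nabla_Q f_\varepsilon(Q_2,R_2)\|_2 \leq \|C\|_2\|\Diag(1/g)\|_2\|R_1-R_2\|_2 + \varepsilon\|\log Q_1 - \log Q_2\|_2.
\end{align*}
Since entries of $R$ lie in $[0,1]$, $-H$ is $1$–strongly convex w.r.t. the $\ell_2$ norm (the Hessian is $\Diag(1/R_{ij}) \succeq I$), which via Cauchy–Schwarz yields $\|R_1-R_2\|_2 \leq \|\log R_1 - \log R_2\|_2$; the same holds on the $Q$–side. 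Squaring and applying $(a+b)^2 \leq 2a^2+2b^2$ yields
\begin{align*}
\|\nabla_Q f_\varepsilon(Q_1,R_1) - \nabla_Q f_\varepsilon(Q_2,R_2)\|_2^2 \leq 2\|C\|_2^2\|\Diag(1/g)\|_2^2\|\log R_1 - \log R_2\|_2^2 + 2\varepsilon^2\|\log Q_1 - \log Q_2\|_2^2.
\end{align*}
A symmetric estimate holds for the $R$–block. Summing the two and factoring, one obtains the desired gradient inequality with $L_\varepsilon = \sqrt{2(\|C\|_2^2\|\Diag(1/g)\|_2^2 + \varepsilon^2)}$, from which $L_\varepsilon$–smoothness relative to $h$ on $X$ follows by the standard integration argument invoked in the proof of Proposition~\ref{prop:cvg-MD-Dykstra}.

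\textbf{Step 2: invoke the MD bound.} The two blocks $(Q_{k+1},R_{k+1})$ are obtained by joint KL–projection against $\nabla f_\varepsilon(Q_k,R_k)$, so this is a genuine instance of mirror descent on $X$ with prox–function $h$. Applying Proposition~\ref{prop-MD-general} with the constant step size prescribed in the statement turns the relative–smoothness bound from Step~1 into an $\mathcal{O}(L_\varepsilon D_0/N)$ bound on $\min_{1\le k\le N}\Delta_\varepsilon((Q_k,R_k),\gamma_k)$, which is exactly the claim.

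\textbf{Main obstacle.} Nothing is conceptually hard: the two steps are essentially a direct specialisation of the argument already carried out for Proposition~\ref{prop:cvg-MD-Dykstra}, with the weight vector $g$ removed from the list of variables and the cross terms between $Q$ and $g$ (resp. $R$ and $g$) absent. The one point requiring care is the book‑keeping in Step~1, namely matching every $\|\cdot\|_2$ bound on differences of coupling matrices to a $\|\cdot\|_2$ bound on differences of their coordinate‑wise logarithms so that the final inequality has the correct form $\|\nabla f_\varepsilon\|_2 \le L_\varepsilon \|\nabla h\|_2$ and the scalar factor matches the announced $L_\varepsilon$.
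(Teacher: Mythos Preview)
Your proposal is correct and follows exactly the paper's approach: the paper's proof consists of the single sentence ``A similar proof of the one given for Proposition~\ref{prop:cvg-MD-Dykstra} gives that $f_\varepsilon$ is $L_\varepsilon$-smooth relatively to $H$,'' and you have fleshed out precisely that argument, computing the gradients, using $\|R_1-R_2\|_2\le\|\log R_1-\log R_2\|_2$, and combining the two blocks to recover the announced $L_\varepsilon$ before invoking Proposition~\ref{prop-MD-general}. One small bookkeeping point worth flagging (not a fault of your argument relative to the paper): Proposition~\ref{prop-MD-general} is stated with stepsize $\gamma_k=1/(2L)$ and bound $4LD_0/N$, whereas the present statement uses $\gamma_k=1/L_\varepsilon$ and $2L_\varepsilon D_0/N$; the paper does not reconcile these constants either, so you are matching the paper's level of rigor here.
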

\begin{proof}
A similar proof of the one given for Proposition~\ref{prop:cvg-MD-Dykstra} gives that $f_\varepsilon$ is $L_\varepsilon$-smooth relatively to $H$.
\end{proof}
% Note first that the negative entropy $-H(\cdot)$ is 1-strongly convex on $\Pi_{a,g}\times \Pi_{b,g}$ (according to the norm $\Vert(Q,R)\Vert:=\max(\Vert Q\Vert_1,\Vert R\Vert_1$) and the objective is $2\Vert C\Vert_{\infty}\Vert 1/g\Vert_{\infty}$-smooth\footnote{It means that, if we denote $f(Q,R):= \langle C,Q\text{Diag}(1/g)R^T\rangle$, then $\Vert \nabla f (Q_1,R_1) - \nabla f (Q_2,R_2)\Vert_{*}\leq \frac{2\Vert C\Vert_{\infty}}{\Vert g\Vert_{\infty}}\Vert(Q_1-Q_2,R_1-R_2)\Vert$ where $\Vert (Q,R)\Vert_{*}:=\Vert Q\Vert_{\infty}+\Vert R\Vert_{\infty}$.}. Therefore by considering constant stepsizes $\gamma_k:=\frac{\Vert g\Vert_{\infty}}{2\Vert C\Vert_{\infty}}$, on can apply [Theorem 1,~\cite{ghadimi2013minibatch}] and obtains that for all $N\geq 1$,
% $$\min_{1\leq k\leq N}\Vert \nabla_{k}\Vert \leq \frac{4\Vert C\Vert_{\infty}\Vert 1/g\Vert_{\infty}  D_0}{N}$$
% where $\nabla_k:=\frac{1}{\gamma_k}(Q_{k}-Q_{k+1},R_{k}-R_{k+1})$ is the generalized projected gradient and $D_0:=\langle C,Q_0\text{Diag}(1/g)R_0^T\rangle - \text{LOT}_{r,g}$ is the distance of the initial value to the optimal one. 

Let us now introduce our first algorithm~(\ref{alg-LOT-F}) to compute a positive low-rank factorization of the optimal coupling. Here we consider the case where $g:=\mathbf{1}_r/r$. Before introducing our algorithm it is worth noting that a trivial initialization may lead to a trivial fixed point in the MD updates. Indeed if one initialize $Q:=ag^T$ and $R:=bg^T$, then  $CR\text{Diag}(1/g)=Ca\mathbf{1}^T$ and $C^TQ\text{Diag}(1/g)=C^Tb\mathbf{1}^T$ and therefore $(Q,R)$ is a fixed point of the MD. To avoid this, we initialize our algorithm in the following way: let $\lambda:=\min_{i,j,k}(a_i,b_j,g_k)/2$, 
$a_1\in\Delta_{n}^{*}\backslash\{a\}$, $a_2:= (a - \lambda a_1)/(1-\lambda)$, $b_1\in\Delta_{n}^{*}\backslash\{b\}$, $b_2:= (b - \lambda b_1)/(1-\lambda)$, $g_1\in\Delta_{r}^{*}\backslash\{g\}$ and $g_2:= (g - \lambda g_1)/(1-\lambda)$. We can now define our initialization as $Q:=\lambda a_1 g_1^T+ (1-\lambda) a_2g_2^T$, $R:=\lambda b_1 g_1^T+ (1-\lambda) b_2g_2^T$.

\begin{algorithm}[H]
\SetAlgoLined
\textbf{Inputs:} $C,a,b,\delta,Q,R,g,\gamma,\delta_{\text{S}}$\\
\Repeat{$ \Delta((Q,R),\gamma) <\delta$}{
    $Q_{\text{old}}\gets Q,~R_{\text{old}}\gets R\\
    C^{(1)}\gets CR\text{Diag}(1/g)-\frac{1}{\gamma}\log(Q),\\
    C^{(2)}\gets C^TQ\text{Diag}(1/g)-\frac{1}{\gamma}\log(R),\\ 
    K^{(1)}\gets \exp(-\gamma C^{(1)}),\\
    K^{(2)}\gets \exp(-\gamma C^{(2)}),\\
    u,v\gets \text{Sinkhorn}(K^{(1)},a,g,\delta_{\text{S}})~(\text{Algorithm}~(\ref{alg-sink})),\\
    Q\gets \text{Diag}(u) K^{(1)}  \text{Diag}(v), \\
    u,v\gets \text{Sinkhorn}(K^{(2)},a,g,\delta_{\text{S}})~(\text{Algorithm}~(\ref{alg-sink})),\\
    R\gets \text{Diag}(u) K^{(2)}\text{Diag}(v)$
  }
\textbf{Result:} $Q,R$
\caption{$\text{LOT-F}(C,a,b,\delta)$ \label{alg-LOT-F}}
\end{algorithm}

\paragraph{Computational Cost.} Note that the kernels $(K^{(i)})_{1\leq i\leq 2}$ considered in algorithm~(\ref{alg-LOT-F}) live in $\mathbb{R}_{+}^{n\times r}\times\mathbb{R}_{+}^{m\times r}$ and therefore each iteration of both Sinkhorn algorithms can be computed either in $\mathcal{O}(nr)$ or in $\mathcal{O}(mr)$ algebraic operations as it involves only matrix/vector multiplications of the form $K^{(i)}v$ and $(K^{(i)})^Tu$. However without any assumption on the cost matrix $C$, computing $(K^{(i)})_{1\leq i\leq 2}$ costs $\mathcal{O}(nmr)$ algebraic operations as it requires to compute both $CR$ and $C^TQ$ at each iteration. Thanks to assumption~\ref{assump-low-rank-sin}, such multiplications can be performed in $\mathcal{O}((n+m)dr)$ algebraic operations and thus algorithm~(\ref{alg-LOT-F}) requires only a linear number of algebraic operations with respect to the number of samples at each iteration.

In the following, we will see that if we do not fix the marginal, the problem can also be solved efficiently as each iteration of the MD algorithm can be seen as a wasserstein barycenter problem.

\section{A Positive low-rank factorization with free marginal}
\label{sec-LR-IBP}
\label{sec-IBP}
Applying a MD method to the objective~(\ref{eq-LOT-reformulated-ent}) leads, for all $k\geq 0$, to the following updates 
\begin{equation}
\label{eq-barycenter-LOT}  
 (Q_{k+1},R_{k+1},g_{k+1}):= \argmin_{\bm{\zeta} \in\mathcal{C}_1(a,b,r)\cap \mathcal{C}_2(r)} \text{KL}(\bm{\zeta},\bm{\xi}_k)
\end{equation}
where $(Q_0,R_0,g_0)\in\mathcal{C}_1(a,b,r)\cap\mathcal{C}_2(r)$ is an initial point,
$\bm{\xi}_k:=(\xi_{k}^{(1)},\xi_{k}^{(2)},\xi_{k}^{(3)})$, $\xi_{k}^{(1)}:=\exp(-\gamma_kCR_k\text{Diag}(1/g_k)_k- (\gamma_k\varepsilon-1)\log(Q_k))$, $\xi_{k}^{(2)}:=\exp(-\gamma_kC^TQ_k\text{Diag}(1/g_k)- (\gamma_k\varepsilon-1)\log(R_k))$,
$\xi_{k}^{(3)}:=\exp(\gamma_k\omega_k/g_k^2- (\gamma_k\varepsilon-1)\log(g_k))$ with  $[\omega_k]_i:=[Q_k^TCR_k]_{i,i}$ for all $i\in\{1,\dots,r\}$ and $(\gamma_k)_{k\geq 0}$ is a sequence of positive real numbers. 

Eq.~(\ref{eq-barycenter-LOT}) is well defined. Indeed as the kernels $(\xi_{k}^{(i)})$ are matrices with positive coefficients, the infimum is attained in $\mathcal{C}_1(a,b,r)\cap\mathcal{C}_2(r)$ and the problem admits a unique solution. Moreover solving Eq.~(\ref{eq-barycenter-LOT}) bowls down to solve
\begin{equation}
\label{reformulation-update-MD-LOT}
 (Q_{k+1},R_{k+1},g_{k+1}):= \\
 \argmin_{\bm{\zeta}\in\overline{\mathcal{C}_1(a,b,r)}\cap \mathcal{C}_2(r)} \text{KL}(\bm{\zeta},\bm{\xi}_k)
\end{equation}
In order to solve Eq.~(\ref{reformulation-update-MD-LOT}), we consider the Iterative Bregman Projections (IBP) algorithm. Starting from $\bm{\zeta}^{(k)}_0:=\bm{\xi}_k$, the IBP algorithm consists in computing for all $j\geq 0$, 
\begin{align*}
    \bm{\zeta}_{2j+1}^{(k)} &= \mathcal{P}_{\overline{\mathcal{C}_1(a,b,r)}}^{\text{KL}}(\bm{\zeta}_{2j}^{(k)})\\
    \bm{\zeta}_{2j+2}^{(k)} &= \mathcal{P}_{\mathcal{C}_2(r)}^{\text{KL}}(\bm{\zeta}_{2j+1}^{(k)}).
\end{align*}
As $\overline{\mathcal{C}_1(a,b,r)}$ and $\mathcal{C}_2(r)$ are affine subspaces (note that nonnegativity constraints are already in the definition of the objective) one can show that $\bm{\zeta}_{j}^{(k)}$ converges towards the unique solution of Eq.~(\ref{reformulation-update-MD-LOT}),~\cite{bregman1967relaxation}. Remarks that the projection on $\overline{\mathcal{C}_1(a,b,r)}$ can be computed very easily as one has for any $\tilde{\bm{\xi}}:=(\tilde{Q},\tilde{R},\tilde{g})\in\mathbb{R}_{+}^{n\times r}\times \mathbb{R}_{+}^{n\times r} \times \mathbb{R}_{+}^{r}$, 
\begin{align*}
    \mathcal{P}_{\overline{\mathcal{C}_1(a,b,r)}}^{\text{KL}}(\tilde{\bm{\xi}})=\left(\text{Diag}\left(\frac{a}{\tilde{Q}\mathbf{1}_r}\right)\tilde{Q},\text{Diag}\left(\frac{b}{\tilde{R}\mathbf{1}_r}\right)\tilde{R},\tilde{g}\right)
\end{align*}
and the solution of the projection on $\mathcal{C}_2(r)$ is already given in Proposition~\ref{prop:proj-C2}.
% Let us now show the solution of the projection on $\mathcal{C}_2(r)$.
% \begin{prop}
% For $\tilde{\bm{\xi}}:=(\tilde{Q},\tilde{R},\tilde{g})\in\mathbb{R}_{+}^{n\times r}\times \mathbb{R}_{+}^{n\times r} \times \mathbb{R}_{+}^{r}$, the projection $(Q,R,g)=\mathcal{P}_{\mathcal{C}_2(r)}^{\text{KL}}(\tilde{\bm{\xi}})$ satisfies 
% \begin{align*}
% Q&=\tilde{Q}\text{Diag}(g/\tilde{Q}^T\mathbf{1}_n)\\
% R &= \tilde{R}\text{Diag}(g/\tilde{R}^T\mathbf{1}_m)\\
% g &= (\tilde{g}\odot \tilde{Q}^T\mathbf{1}_n \odot \tilde{R}^T\mathbf{1}_m)^{1/3}
% \end{align*}
% \end{prop}
% \begin{proof}
% The first order conditions of the projection states that there exists $(\lambda_1,\lambda_2)\in\mathbb{R}^r\times\mathbb{R}^r$ such that
% \begin{align*}
%     \log(Q/\tilde{Q}) + \mathbf{1}_n\lambda_1^T &= 0\\
%     \log(R/\tilde{R}) + \mathbf{1}_m\lambda_2^T &= 0\\
%     \log(g/\tilde{g}) - (\lambda_1 +\lambda_2) &= 0\\
% \end{align*}
% Moreover the conditions  $Q^T\mathbf{1}_n=R^T\mathbf{1}_m=g$ imply that
% \begin{align*}
%     Q&=\tilde{Q}\text{Diag}(g/\tilde{Q}^T\mathbf{1}_n)\\
% R &= \tilde{R}\text{Diag}(g/\tilde{R}^T\mathbf{1}_m)\\
% g^3 & =  \tilde{g}\odot \tilde{Q}^T\mathbf{1}_n \odot \tilde{R}^T\mathbf{1}_m
% \end{align*}
% from which the result follows.
% \end{proof}
\paragraph{Efficient computation of the updates.}  For all $k\geq 0$, starting with $\bm\zeta_0^{(k)}:=\bm{\xi}_k$ the IBP algorithm leads to a simple algorithm~(\ref{alg-IBP}) which computes only scaling vectors. More precisely, the IBP algorithm produces the iterates $(\bm{\zeta}_n^{(k)})_{n\geq 0}$ which satisfy for all $n\geq 0$ $\bm{\zeta}_n^{(k)}=(Q_n^{(k)},R_n^{(k)},g_n^{(k)})$ where
\begin{align*}
    Q_n^{(k)}&=\text{Diag}(u_n^{k,1})\xi_{k}^{1}\text{Diag}(v_n^{k,1})\\
    R_n^{(k)}&=\text{Diag}(u_n^{k,2})\xi_{k}^{2}\text{Diag}(v_n^{k,2})
\end{align*}
for the sequences $(u_n^{k,i},v_n^{k,i})$ initialized as $v_0^{k,i}:=\mathbf{1}$ for all $i\in\{1,2\}$ and computed with the iterations
\begin{align*}
    u_n^{k,i}&=\frac{p_i}{\xi_k^{i}v_n^{k,i}}\\
    g_{n+1}^{(k)}&=(g_{n}^{(k)})^{1/3} \prod_{i=1}^2 (v_n^{k,i}\odot(\xi_{k}^{i})^Tu_n^{k,i})^{1/3}\\
     v_{n+1}^{k,i}&=\frac{g_{n+1}^{(k)}}{(\xi_k^{i})^T u_n^{k,i}}
\end{align*}
where we have denoted $p_1:=a$ and $p_2:=b$ to simplify the notations. 

\begin{algorithm}[H]
\SetAlgoLined
\textbf{Inputs:} $\xi^{(1)},\xi^{(2)},g:=\xi^{(3)},p_1,p_2,\delta, v^{(i)}$\\
\Repeat{$\sum_{i=1}^2\|u^{(i)}\odot \xi^{(i)}v^{(i)} - p_i\|_1 <\delta$}{
    $u^{(i)}\gets p_i/\xi^{(i)}v^{(i)}~\forall i\in\{1,2\},\\
    g\gets (g)^{1/3} \prod_{i=1}^2 (v^{(i)}\odot(\xi^{(i)})^Tu^{(i)})^{1/3},\\ v^{(i)}\gets g/(\xi^{(i)})^T u^{(i)} ~\forall i\in\{1,2\}$
  }
$Q\gets \text{Diag}(u^{(1)})\xi_{k}^{(1)}\text{Diag}(v^{(1)})$\\
$R\gets \text{Diag}(u^{(2)})\xi_{k}^{(2)}\text{Diag}(v^{(2)})$\\
\textbf{Result:} $Q,R,g$
\caption{$\text{LR-IBP}((\xi^{(i)})_{1\leq i\leq 3},p_1,p_2,\delta)$ \label{alg-IBP}}
\end{algorithm}

Let us now introduce the proposed MD algorithm applied to~(\ref{eq-LOT-reformulated}). By denoting $\mathcal{D}(\cdot)$ the operator extracting the diagonal of a square matrix we  obtain the following algorithm~(\ref{alg-MDLROT}) to solve Eq.~(\ref{eq-LOT}). We initialize our algorithm with the exact same procedure as in algorithm~(\ref{alg-LOT-F}).
% Note that we have to take care of the initialization as for any $g\in\Delta_r^{*}$, if one initialize $Q:=ag^T$ and $R:=bg^T$, then  $CR\text{Diag}(1/g)$ and $CQ\text{Diag}(1/g)$ are rank-one matrix and therefore $Q,R,g$ is a fixed point of the algorithm~(\ref{alg-MDLROT}). To avoid this, we initialize in the following way: let $g:=\mathbf{1}_r/r$, $\lambda:=\min_{i,j,k}(a_i,b_j,g_k)/2$, 
% $a_1\in\Delta_{n}^{*}\backslash\{a\}$, $a_2:= (a - \lambda a_1)/(1-\lambda)$, $b_1\in\Delta_{n}^{*}\backslash\{b\}$, $b_2:= (a - \lambda b_1)/(1-\lambda)$, $g_1\in\Delta_{r}^{*}\backslash\{g\}$ and $g_2:= (g - \lambda g_1)/(1-\lambda)$. We can now define our initialization as $Q:=\lambda a_1 g_1^T+ (1-\lambda) a_2g_2^T$, $R:=\lambda b_1 g_1^T+ (1-\lambda) b_2g_2^T$ and $g:=\mathbf{1}_r/r$.
\begin{algorithm}[H]
\SetAlgoLined
\textbf{Inputs:} $C,a,b,(\gamma_k)_{k\geq 0}, Q,R,g,\delta$\\
\For{$k=1,\dots$}{
    $\xi^{(1)}\gets\exp(-\gamma_kCR\text{Diag}(1/g)- (\gamma_k\varepsilon-1)\log(Q)),\\ \xi^{(2)}\gets\exp(-\gamma_kC^TQ\text{Diag}(1/g)- (\gamma_k\varepsilon-1)\log(R)),\\
    \omega\gets \mathcal{D}(Q^TCR),~\xi^{(3)}\gets\exp(\gamma_k\omega/g^2- (\gamma_k\varepsilon-1)\log(g)),\\
    Q,R,g\gets \text{LR-IBP}((\xi^{(i)})_{1\leq i\leq 3},a,b,\delta)~(\text{Algorithm~(\ref{alg-IBP})})$
  }
\textbf{Result:} $\langle C,Q\text{Diag}(1/g)R^T\rangle$
\caption{$\text{LOT}(C,a,b,r,\delta)$ \label{alg-MDLROT}}
\end{algorithm}

\paragraph{Computational Cost.} Note that $(\xi^{(i)})_{1\leq i\leq 3}$ considered in algorithm~(\ref{alg-MDLROT}) lives in $\mathbb{R}_{+}^{n\times r}\times\mathbb{R}_{+}^{m\times r}\times \times\mathbb{R}_{+}^{r} $ and therefore each iteration of algorithm~(\ref{alg-IBP}) can be computed in $\mathcal{O}((n+m)r)$  algebraic operations as it involves only matrix/vector multiplications of the form $\xi^{(i)}v_i$ and $(\xi^{(i)})^Tu_i$. However without any assumption on the cost matrix $C$, computing $(\xi^{(i)})_{1\leq i\leq 3}$ costs $\mathcal{O}(nmr)$ algebraic operations as it requires to compute both $CR$ and $C^TQ$ at each iteration. Thanks to assumption~\ref{assump-low-rank-sin}, such multiplications can be performed in $\mathcal{O}((n+m)dr)$ algebraic operations and thus algorithm~(\ref{alg-MDLROT}) requires only a linear number of algebraic operations with respect to the number of samples at each iterations.

\section{Addiational Experiments}
\label{sec-exp-add}
In Fig.~\ref{fig-LR-Euclidean}, we compare two Gaussian mixture densities sampled with $n=m=10000$ points in 2D. The two densities considered are
\begin{align*}
    f_X(x)&=\frac{1}{3}\frac{\exp\left((x-\mu_1)^T\Sigma^{-1}(x-\mu_1)\right)}{\sqrt{2\pi|\Sigma|}} + \frac{1}{3}\frac{\exp\left((x-\mu_2)^T\Sigma^{-1}(x-\mu_2)\right)}{\sqrt{2\pi|\Sigma|}} + \frac{1}{3}\frac{\exp\left((x-\mu_3)^T\Sigma^{-1}(x-\mu_3)\right)}{\sqrt{2\pi|\Sigma|}}\\
        f_Y(x)&=\frac{1}{2}\frac{\exp\left((x-\nu_1)^T\Sigma^{-1}(x-\nu_1)\right)}{\sqrt{2\pi|\Sigma|}} + \frac{1}{2}\frac{\exp\left((x-\nu_2)^T\Sigma^{-1}(x-\nu_2)\right)}{\sqrt{2\pi|\Sigma|}}
\end{align*}
where 
\begin{align*}
    &\mu_1 = [0,0],\quad~\mu_2=[0,1],\quad~\mu_3=[1,1],\quad~\nu_1 = [0.5,0.5],\quad~\nu_2=[-0.5,0.5],~\quad \Sigma =0.05\times\text{Id}_2.
\end{align*}
We show in Fig.~\ref{fig-data-mix} a plot of the two distributions considered.
\begin{figure*}[!h]
\centering
\includegraphics[width=0.5\textwidth]{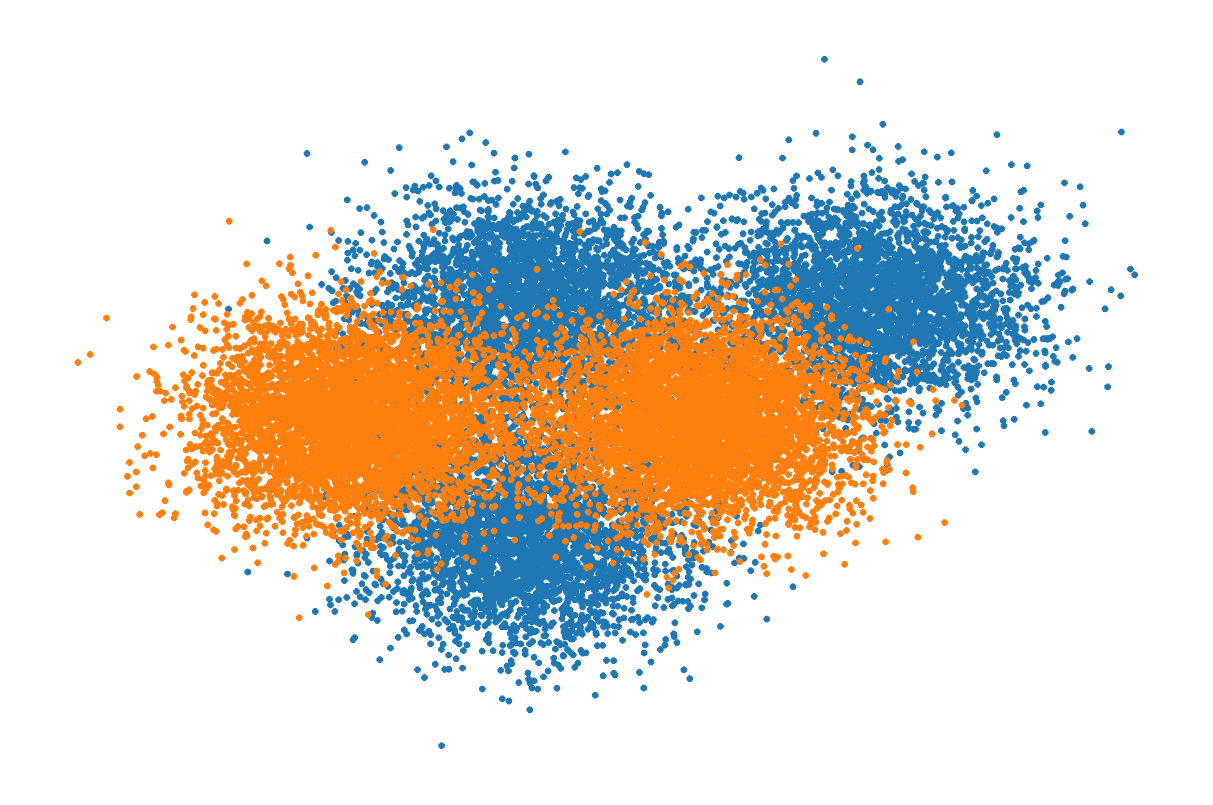}
\caption{Plot of the Gaussian mixtures considered in Fig.~\ref{fig-LR-Euclidean}.}\label{fig-data-mix}
\vspace{-0.1cm}
\end{figure*}
In Fig.~\ref{fig-LR-euclidean-d-10}, we consider the exact same setting as the one presented in Fig.~\ref{fig-LR-Euclidean} but we increase the dimension of the problem. More precisely we consider two Gaussian mixture densities samples with $n=m=10000$ points in 10D where
\begin{align*}
    &\mu_1 = [0,\dots,0],~\mu_2=[0,1,0,\dots,0],~\mu_3=[1,1,0,\dots,0],\\
    &~\nu_1 = [0.5,0.5,0,\dots,0],~\nu_2=[-0.5,0.5,0,\dots,0],\\
    &\Sigma =0.05\times\text{Id}_{10}.
\end{align*}
Similarly as in Fig.~\ref{fig-LR-Euclidean}, we observe that   \textbf{LOT} and \textbf{LOT Quad} provide similar results while \textbf{LOT} is faster. All kernel-based methods fail to converge in this setting. Moreover we see that for small regularizations $\varepsilon$, our method is able to approximate faster than \textbf{Sin} the true OT thanks to the low-rank constraint. Note also that we observe again a difference between the two entropic regularizations of the \textbf{Sin} objective and \textbf{LOT} objective. Indeed the range of $\varepsilon$ where \textbf{Sin} provides an efficient approximation of the true OT is larger than the one of \textbf{LOT}. Indeed recall that for $\textbf{LOT}$, we regularize \emph{twice} as we constraint the nonnegative rank of the couplings and we add an entropic term to regularize the objective.

\begin{figure*}[!h]
\centering
\includegraphics[width=1\textwidth]{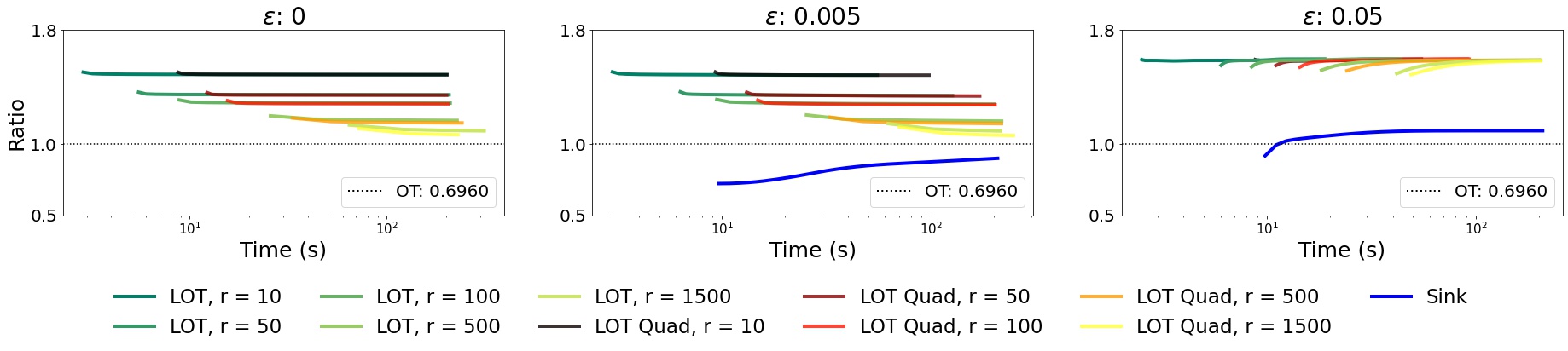}
\caption{Comparison of the time-accuracy tradeoff for different methods for estimating the OT or its regularized version between two mixture of gaussians in 10D.}\label{fig-LR-euclidean-d-10}
\vspace{-0.1cm}
\end{figure*}

In Fig.~\ref{fig-LR-Square-Euclidean}, we compare the time-accuracy tradeoff for different methods on a synthetic problem where we aim at estimating either the OT or its regularized version between two gaussians in 2D. Here we consider the exact same setting but we increase the dimension of the problem: $d=10$. As in Fig.~\ref{fig-LR-Square-Euclidean}, our proposed method obtains an efficient approximation of the OT or its regularized version for all rank $r$ faster than other low-rank methods in the regime of small $\varepsilon$. We also see that for all low-rank methods, a rank of $r=500$ is not enough in this setting to obtain the exact OT, but as the rank increases, the approximation gets better.
\begin{figure*}[!h]
\centering
\includegraphics[width=1\textwidth]{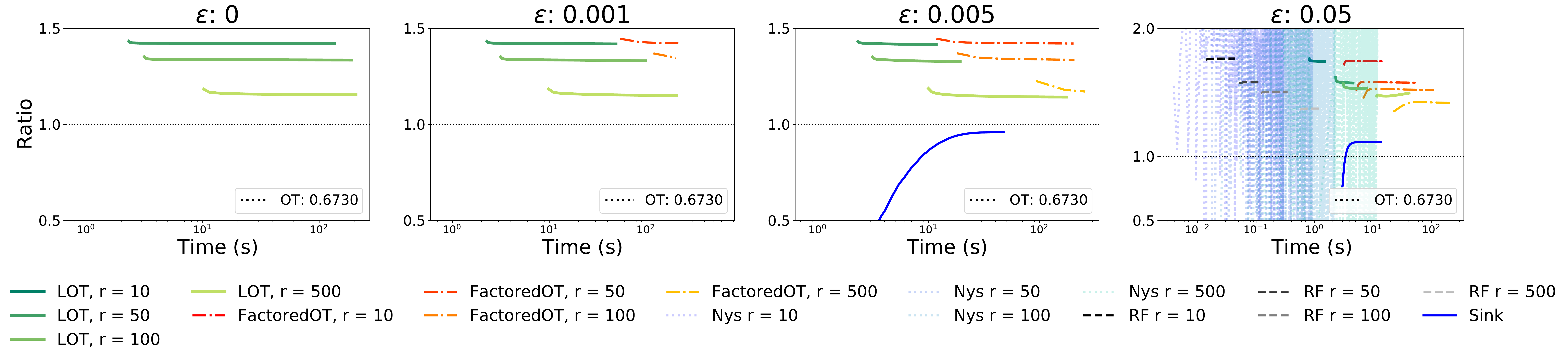}
\caption{In this experiment, we consider two Gaussian distributions evaluated on $n=m=5000$ in 10D. The first one has a mean of $(1,\dots,1)^T\in\mathbb{R}^{10}$ and identity covariance matrix $I_{10}$ while the other has 0 mean and covariance $0.1\times I_{10}$. The ground cost is the squared Euclidean distance.
}\label{fig-LR-Square-Euclidean-d-10}
\vspace{-0.1cm}
\end{figure*}

\end{document}